\documentclass[journal,twoside]{IEEEtran}

\usepackage{cite}
\usepackage{amsmath,amssymb}
\usepackage{graphicx}
\usepackage{url}
\usepackage{amsmath,amssymb,booktabs,makecell,array,threeparttable,cite,url}
\usepackage[T1]{fontenc}
\usepackage{listings}

\providecommand{\Y}{\ensuremath{\checkmark}}
\providecommand{\N}{\ensuremath{\times}}
\providecommand{\Pt}{\ensuremath{\circ}}
\providecommand{\NS}{--}

\usepackage{array}
\usepackage{makecell}
\usepackage{tabularx}

\newcolumntype{Y}{>{\centering\arraybackslash}X}
\newcolumntype{L}{>{\raggedright\arraybackslash}X}
\usepackage{orcidlink}
\hypersetup{hidelinks}
\usepackage{listings}
\usepackage{booktabs}
\usepackage{tabularx}
\usepackage{mathrsfs}
\def\BibTeX{{\rm B\kern-.05em{\sc i\kern-.025em b}\kern-.08em
T\kern-.1667em\lower.1ex\hbox{E}\kern-.125emX}}

\usepackage{amsmath,amssymb}
\usepackage{graphicx,booktabs,cite,url}

\newcommand{\R}{\mathbb{R}}
\newcommand{\SE}{\mathrm{SE}(3)}
\newcommand{\Ad}{\operatorname{Ad}}
\newcommand{\Log}{\operatorname{Log}}
\newcommand{\Exp}{\operatorname{Exp}}
\newcommand{\rank}{\operatorname{rank}}
\newcommand{\col}{\operatorname{col}}

\newtheorem{proposition}{Proposition}

\begin{document}
\title{RoboCompiler: Graph-Native Compilation of Closed-Chain Robots for Consistent Modeling, Control, and Simulation}

\author{Mehdi Heydari Shahna\,\orcidlink{0000-0002-1310-5392}, Joongheon Kim\,\orcidlink{0000-0003-2126-768X}, and Jouni Mattila\,\orcidlink{0000-0003-1799-4323}

\thanks{This work was supported by the grant named the Post Docs in Companies (PoDoCo) program in Finland.}
\thanks{M. H. Shahna and J. Mattila are with the Faculty of Engineering and Natural Sciences, Tampere University, Tampere, Finland; and J. Kim is with the Department of Electrical and Computer Engineering at Korea University, Seoul, Republic of Korea. (Corresponding author e-mail: mehdi.heydarishahna@tuni.fi)}
\thanks{Code: \textcolor{blue}{\url{https://github.com/Mehdi-Heydari-Shahna/RoboCompiler}}}
%\thanks{Digital Object Identifier (DOI): see top of this page.}%
}

\maketitle

\begin{abstract}
Robots with kinematic loops, coupled actuators, and changing contacts require consistent models of configuration, motion, force, and dynamics. Yet these interfaces are often reconstructed separately for control and simulation, making closure and actuation consistency difficult to maintain. This paper presents RoboCompiler, a graph-native framework that compiles a canonical mechanism graph into a shared mechanical interface. From bodies, joints, frames, inertias, and actuator ports, it constructs closure paths and analytic residual Jacobians, then assembles feasible configurations through rank-checked continuation and correction. A tangent lift maps independent velocities to full robot and task motion, while paired actuator-port maps preserve virtual work. A constraint-curvature correction extends the reduction to accelerations and projected rigid-body dynamics, including floating-base and support modes. Cycle-local evaluation, generated Jacobians, and dependency-aware reuse enable localized updates when closure inputs change. We evaluate physical loops and task-induced constraints on an industrial excavator, Unitree Go2, Franka Panda, Kangaroo, and a six-UPS Stewart platform. High-precision constrained-dynamics and independent Pinocchio checks confirm mechanical consistency; MuJoCo and Isaac Sim/PhysX executions demonstrate task performance and model reuse under native contact. For Kangaroo, compilation reduces residual-and-Jacobian evaluation time by 96.7\% and closed-loop rollout wall time by 66.8\%, with dynamics and control held fixed.
\end{abstract}
\begin{IEEEkeywords}
Closed-chain mechanisms, constrained dynamics, kinematics, parallel robots, robot modeling, simulation.
\end{IEEEkeywords}

\section{Introduction}

A robot's physical joints and actuator attachments determine its
admissible motion and how mechanical work enters the system. Closed
linkages make these relationships difficult to carry through a
computational model: a dynamics library may open a loop to form a
tree, a controller may use a different set of independent coordinates,
and a simulator may impose the same connection through native
constraints. Contact and support introduce further mode-dependent
constraints. These representations can describe one mechanism, but
their configuration, velocity, acceleration, and effort interfaces
must remain compatible. This requirement is especially consequential
for robots with multiple coupled linkages, moving actuator bodies,
and floating bases \cite{shahna2025model}.

Robot-description methods provide a foundation for representing such
systems. URDF+ adds loop joints and coupling relations to URDF and
automatically groups constrained bodies for recursive
algorithms~\cite{chignoli2024urdfplus}. Extended URDF records closure
and actuation information alongside a tree model and provides tools
to construct corresponding Pinocchio models~\cite{extended}. Both
make physical connections available to downstream computation. A
separate line of work infers a robot's connectivity and joint
geometry from proprioceptive data~\cite{morphology}. 

Methods for operating on constrained configurations address another
part of the problem. Manifold-based kinodynamic planning integrates
motion on holonomic constraint manifolds~\cite{bordalba2021kinodynamic}.
HyRoDyn combines analytical submechanism solutions with numerical
closure and provides mappings between independent and actuation
coordinates for kinematics and dynamics~\cite{kumar2020hyrodyn,
kumar2019hyrodynidetc}. The Kangaroo model develops detailed
closed-loop kinematic and actuator relationships for a floating-base
mechanism~\cite{mingohoffman2024kangaroo}. PACDM constructs closure
residuals and analytical Jacobians from ordered transformation paths,
then uses rank selection, defect homotopy, and continuation to obtain
a feasible configuration and its local differential
map~\cite{dastranj2026pacdm}. \par
Constrained dynamics and control have advanced in parallel.
Constraint embedding and GRBDA enable recursive dynamics for
mechanisms with local loops~\cite{chignoli2025propagation,grbda};
recent work develops dynamics derivatives for
constraint-embedded models~\cite{volpi}. Pinocchio provides
constrained-dynamics solvers, including recursive closed-loop forward
dynamics~\cite{carpentier2019pinocchio,sathya2026lcaba,pinocchio410},
while MuJoCo represents loop connections through equality
constraints and executes contact-rich simulations
\cite{todorov2012mujoco,mujocochangelog}. At the control level,
hierarchical optimization, redundancy resolution, geometric motion
policies, and optimal control provide ways to use constrained
kinematics and dynamics for tasks~\cite{shahna2024exponential, hqp,redundancy,pbds,parallel, shahna2025anti}.
\par Table~\ref{tab:capabilities} compares selected methods by the interfaces they expose, including closed-chain modeling, assembly, kinematic mapping, dynamics, floating-base contact, and multi-simulator execution. The comparison is intended to summarize documented interface coverage rather than to rank complete software systems with different design goals.

\begin{table*}[h!]
\centering
\begin{threeparttable}
\caption{Capabilities of closed-chain modeling and dynamics tools. \Y: yes; \Pt: partial; \N: no; \NS: not stated in the primary source; n/a: not applicable. Loops in model: closures are part of the model/graph description. Auto.\ cut sel.: spanning tree and cut joints chosen automatically from a general body--joint graph. Assembly: built-in position-level closure solve. Tangent map: explicit map from independent to full velocities. Port maps: explicit actuator-port velocity map with its dual (virtual-work-consistent) effort map; \Pt\ = joint-space torques, gears or couplings only. FD: R = reduced/projected, K = Lagrange-multiplier/KKT, Rec = recursive, S = soft (regularized) constraints; A/N = analytic/numerical $\dot J\dot q$. Float.\ base + contact: floating-base models with contact or support constraints (bilateral or unilateral). Multi-sim.: execution or validation of the same model in external simulators or dynamics libraries (\Y: several; \Pt: one).}
\label{tab:capabilities}
\footnotesize
\setlength{\tabcolsep}{3pt}
\begin{tabular}{@{}>{\raggedright\arraybackslash}p{3.1cm}cc>{\raggedright\arraybackslash}p{2.35cm}ccccccc@{}}
\toprule
Work & \makecell{Loops in\\model} & \makecell{Auto.\\cut sel.} & Closure types & \makecell{Assem-\\bly} &
\makecell{Tangent\\map} & \makecell{$\dot J\dot q$\\term} & \makecell{Port\\maps} & FD &
\makecell{Float.\ base\\+ contact} & \makecell{Multi-\\sim.} \\
\midrule
Bordalba et al.~\cite{bordalba2021kinodynamic} & \NS & \NS & holonomic $\Phi(q)=0$ & \Y & \Pt\tnote{a} & \NS & \NS & manifold ODE & \NS\tnote{b} & \NS \\
HyRoDyn~\cite{kumar2020hyrodyn,kumar2019hyrodynidetc} & \Pt & \N & submechanism library\tnote{c} & \Y & \Y & \Y\,(A) & \Y & R & \NS & \NS \\
Kangaroo model~\cite{mingohoffman2024kangaroo} & \Y & \N & relative pose, selected axes & \Y & \Y & \Y\,(A) & \Y & ID only\tnote{d} & \Y & \Pt \\
URDF+~\cite{chignoli2024urdfplus} & \Y & \N & URDF joint types, linear couplings & \NS & \Pt & \NS & \Pt & n/a\tnote{e} & \NS & \NS \\
Constraint embedding / GRBDA~\cite{chignoli2025propagation,grbda} & \Y\tnote{f} & \N & explicit $G$, implicit $\phi(q)$ & \Pt & \Y & \Y\,(A) & \Pt & Rec & \Pt & \Pt \\
LCABA~\cite{sathya2026lcaba} / Pinocchio 4.1~\cite{carpentier2019pinocchio,pinocchio410} & \Pt & \Pt\tnote{g} & point (3D), weld (6D) & \Pt\tnote{h} & \N & \Y\,(A)\tnote{i} & \Pt & K, Rec & \Y & \Pt\tnote{j} \\
MuJoCo 3.3.7 / 3.14~\cite{todorov2012mujoco,mujocochangelog} & \Y & \N & point, weld, scalar joint/tendon & \Pt\tnote{k} & \N & \N\,/\,\Pt\tnote{l} & \Y & S & \Y & n/a \\
PACDM~\cite{dastranj2026pacdm} & \Pt & \N & SE(3) path-pose equality & \Y & \Y & \NS & \NS & \N\ (kinematics) & \NS & \Pt \\
\midrule
RoboCompiler (this work) & \Y & \Pt\tnote{m} & point, revolute, universal; welds & \Y & \Y & \Y\,(N)\tnote{n} & \Y & R & \Y & \Y \\
\bottomrule
\end{tabular}
\begin{tablenotes}[flushleft]\footnotesize
\item[a] Orthonormal basis of the state-manifold tangent space at each chart centre (local chart coordinates), not an explicit $\dot q$ map.
\item[b] All examples are fixed-base; unilateral contact is listed as future work.
\item[c] Closed-form submechanism types (e.g., 1-RRPR, 2-SPU+1U, 6-UPS) plus a numerical fallback; users annotate the submechanisms.
\item[d] Inverse dynamics via Lagrange multipliers; no forward-dynamics algorithm is stated.
\item[e] Model format; its output is intended for recursive constraint-embedding algorithms.
\item[f] Through URDF+ in the GRBDA library ($\geq$v2.1.0); the T-RO paper itself uses a cluster-graph formalism.
\item[g] The SDF parser keeps one parent joint per multi-parent link in the tree and converts the others into constraints.
\item[h] No library routine; the official examples hand-code an inverse-geometry loop.
\item[i] Analytic drift $\gamma_{c,T}$. In Pinocchio 4.1.0 \texttt{lcaba}, the Baumgarte term is active only for 6D constraints.
\item[j] A Pinocchio unit test compares closed-chain MJCF kinematics with MuJoCo; the LCABA benchmarks stay inside Pinocchio.
\item[k] No assembly routine. With anchors, the constraint is assumed satisfied at \texttt{qpos0}; with sites, the sites snap together at the beginning of simulation.
\item[l] Not computed in 3.3.7; added for \texttt{connect}/\texttt{weld} in 3.7.0; still absent for the \texttt{tendon} rows in our 3.14.0 tests.
\item[m] Tree and revolute cuts generated from physical records in the excavator compiler; the Stewart compiler generates the physical tree for each spherical tree-joint choice; the Kangaroo compiler takes declared cuts.
\item[n] Analytic closure Jacobian; $\dot J\dot q$ by a centered directional finite difference of it.
\end{tablenotes}
\end{threeparttable}
\end{table*}

\begin{figure}[h!]
\centering
\includegraphics[width=\columnwidth]{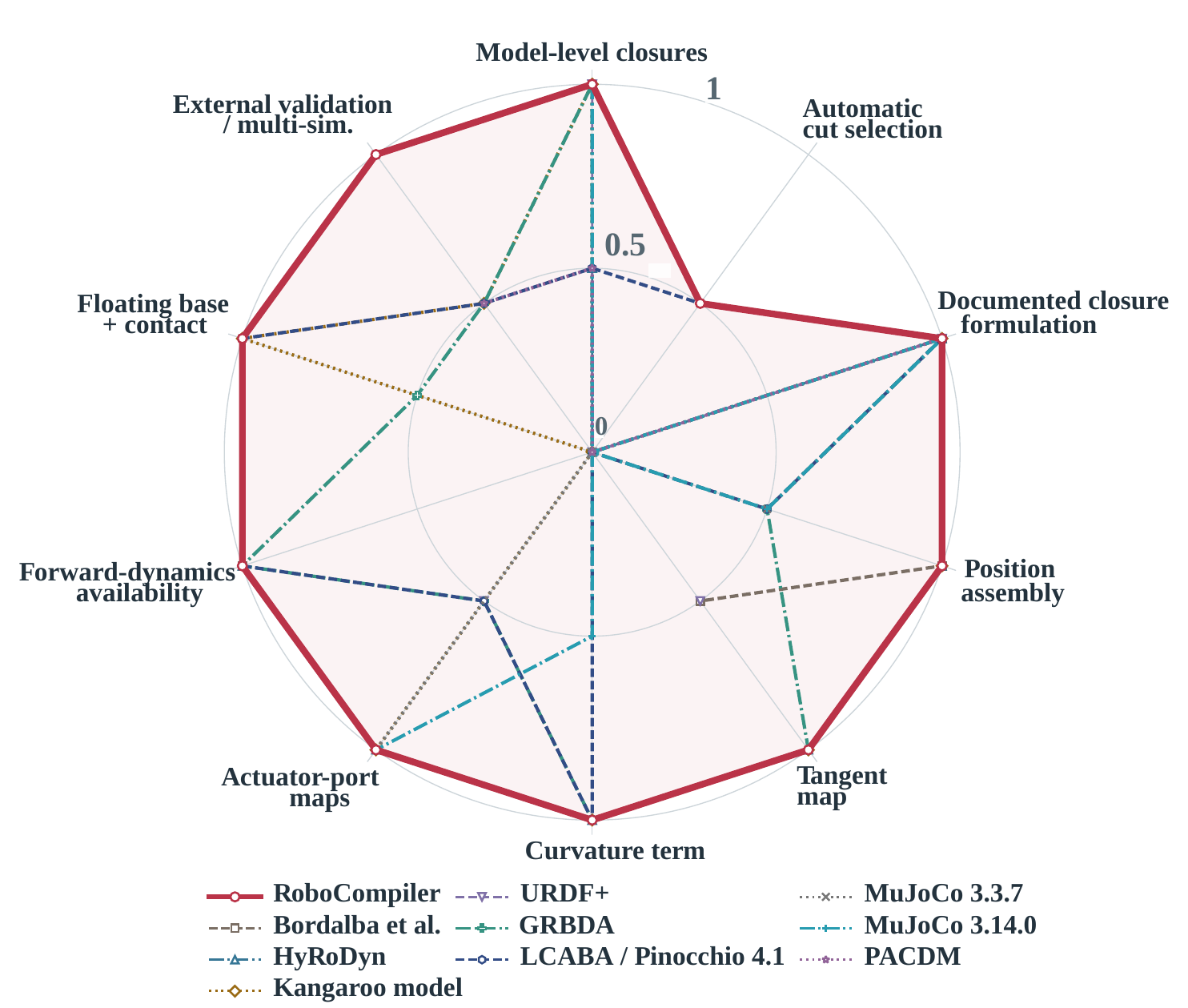}
\caption{Categorical interface coverage from
Table~\ref{tab:capabilities}, with RoboCompiler highlighted.
Scores 1, 0.5, and 0 encode documented, partial, and absent
support; not-stated and inapplicable categories remain gaps.}
\label{fig:rc_capabilities_overlay}
\end{figure}

\begin{figure*}[h!]
\centering
\includegraphics[width=\textwidth,height=0.65\textheight,
keepaspectratio]{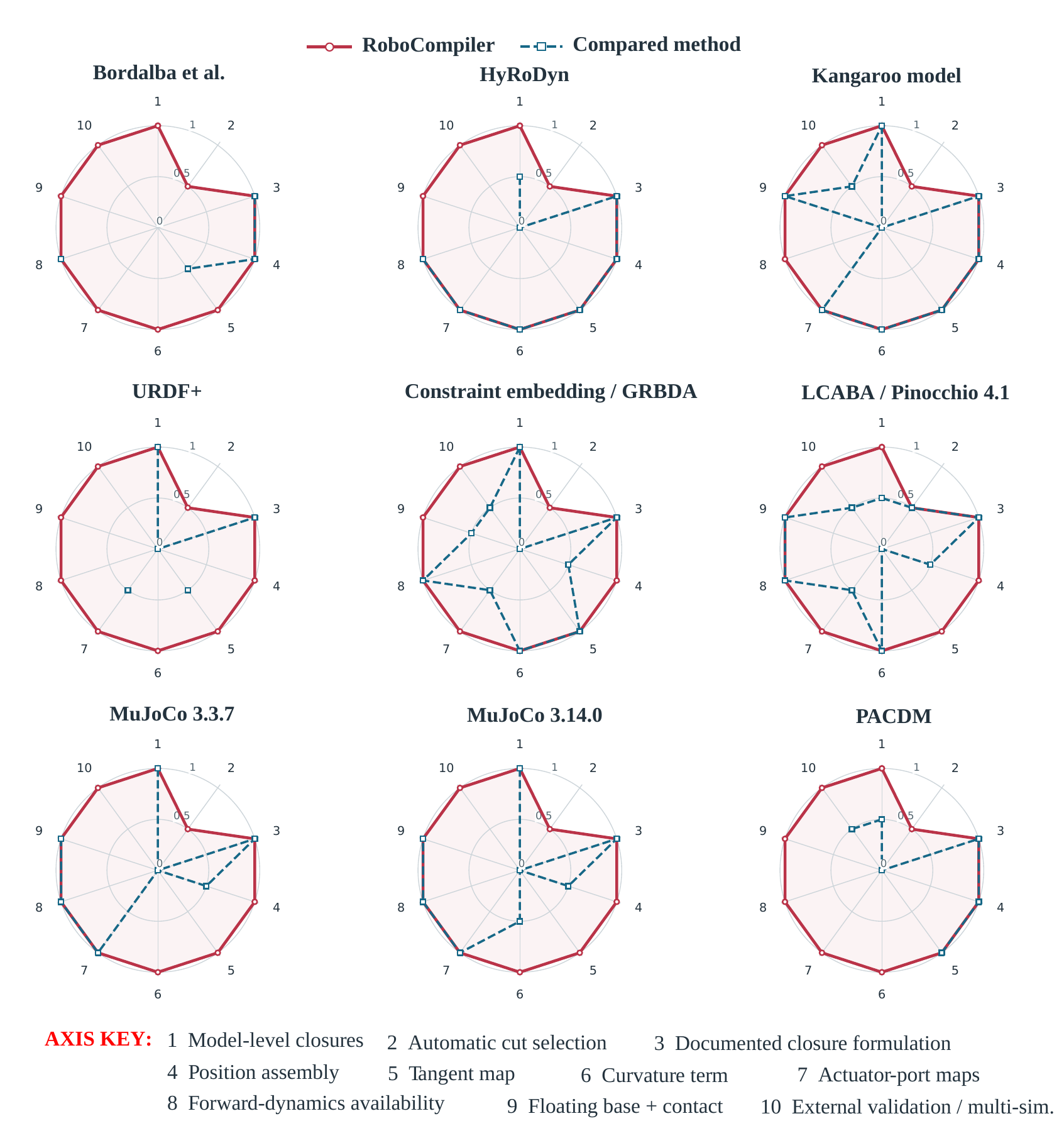}
\caption{Pairwise views of the same categorical entries in
Table~\ref{tab:capabilities}: RoboCompiler (solid red) and each
listed method (dashed blue). A missing or inapplicable entry is
unscored.}
\label{fig:rc_capabilities_panels}
\end{figure*}

Taken together, these advances address major parts of closed-chain model description, constrained assembly, kinematic reduction, dynamics, simulation, and control. They leave, however, an important integration question: how can these computational views be generated from the same physical mechanism description while preserving assembly, motion, acceleration, actuation, and mechanical-power consistency across them?

A physical
description, a feasible closure solution, and a constrained-dynamics
solver must refer to the same assembly branch and actuator
coordinates if their outputs are to be used together. Changing the
computational tree must preserve the restrictions imposed by each
physical connection. Changing independent coordinates must carry the
task differentials and reduced inertia with them. Actuator velocities
and efforts must transform as a power-conjugate pair, and the
acceleration map must retain constraint curvature. Agreement of
coordinate vectors or successful model import alone does not
establish all of these relationships.

We develop RoboCompiler to address this integration problem. RoboCompiler is a graph-native compiler built around a canonical mechanism graph (CMG) whose body, joint, frame, inertia, and actuator-port records retain their physical identities. Starting from the physical mechanism records, the compiler selects tree and cut assignments or accepts declared cuts, generates ordered closure paths and coordinate-dependency records, and provides the path structure required by PACDM.

PACDM continuation and correction establish a feasible assembly branch, after which rank checks and analytical closure Jacobians provide a tangent lift from independent to full velocities. A constraint-curvature term extends the lift to accelerations, while paired port-velocity and effort maps preserve virtual work. These quantities feed reduced rigid-body dynamics and task-level computation, and backend-specific views support independent dynamics checks and native execution in external simulators. Cycle-local evaluation and reuse of unaffected dependency modules reduce repeated closure computation while preserving the accepted PACDM solution.

In this sense, RoboCompiler is complementary to robot-description formats, constrained-dynamics libraries, and simulators: rather than replacing them, it compiles a shared physical mechanism description into mechanically consistent interfaces that these tools can use. Table~\ref{tab:capabilities} and Figs.~\ref{fig:rc_capabilities_overlay} and~\ref{fig:rc_capabilities_panels} summarize the resulting interface coverage: among the compared methods, only RoboCompiler is documented to combine point, revolute, and universal cuts and position-level assembly with an explicit independent-to-full tangent map, virtual-work-dual actuator-port maps, reduced forward dynamics, floating-base/contact workflows, and multi-simulator validation.

The contributions of this work are as follows:
\begin{itemize}
    \item We introduce a graph-native compilation procedure that
    transforms physical mechanism and task records into ordered
    closure paths, dependency modules, rank-checked coordinate
    partitions, and connected interfaces for configuration, motion,
    actuation, and dynamics.

    \item From a shared assembled configuration, we generate the
    tangent lift, a constraint-curvature correction, and
    virtual-work-dual actuator-port maps. Together, these form a
    common independent-coordinate interface for motion control and
    reduced inverse and forward dynamics of fixed-base robots and of floating-base robots under ideal support modes.

    \item We generate cycle-local analytical closure Jacobians and
    dependency-aware update schedules from the graph. Selective
    evaluation and reuse reduce repeated work during continuation
    and local task changes while preserving the underlying PACDM
    closure and correction equations.

    \item We evaluate the interfaces on three physically
    closed-chain mechanisms and two robots with task or actuator
    coupling. Independent constrained-dynamics checks, matched
    computational ablations, and executions in MuJoCo, Pinocchio,
    and Isaac Sim demonstrate mechanical consistency, computational benefit, and reuse across environments.
\end{itemize}

The complete code for all five robots and their MuJoCo, Pinocchio, and Isaac Sim/PhysX workflows is available open source at \textcolor{blue}{\url{https://github.com/Mehdi-Heydari-Shahna/RoboCompiler}}.

\section{Physical Description and Compilation}
\label{sec:graph}

\subsection{Physical Data and Velocity Conventions}

Let $\mathcal G=(\mathcal B,\mathcal J)$ be a body--joint multigraph (principal quantities are listed in Table~\ref{tab:notation}).
Each vertex denotes a rigid-body instance with geometry,
attachment frames, and inertial parameters. Each edge denotes a
physical joint with a permitted motion model and limits.
Repeated parts remain separate instances.
Actuator ports, transmissions, and their limits are separate physical records, so the choice of independent coordinates is decoupled from actuation.

For an oriented joint $j=(b,c)$, define
\begin{equation}
    X_j(q_j)=A_j Q_j(q_j)B_j^{-1}\in\SE,
    \label{eq:edge}
\end{equation}
where $A_j={}^{b}T_{J_b}$ and $B_j={}^{c}T_{J_c}$ locate the two
attachment frames, and $Q_j={}^{J_b}T_{J_c}$ describes the allowed
relative motion. Thus $X_j={}^{b}T_c$ maps coordinates from frame
$c$ to frame $b$. Reverse traversal uses $X_j^{-1}$. Fixed,
revolute, prismatic, spherical, and universal joints fit this
description when their local parameterizations and motion maps
are supplied. Additional holonomic couplings, for example a prescribed gear relation, enter as $\gamma(q)=0$, with their equations and derivatives provided as physical inputs.

The augmented configuration $q\in\mathcal Q$ contains all joint
coordinates, including those on cut edges, and any free-root
poses. Its generalized velocity is $v\in R^n$, with
\begin{equation}
    \dot q=N(q)v.
    \label{eq:N}
\end{equation}
Here $N$ is a full-column-rank local map from generalized
velocities to configuration derivatives; in particular, a
quaternion derivative is not an angular velocity. Spatial vectors
use angular-first ordering, and $\Ad_T$ is the corresponding
adjoint. A numerical increment $\delta v$ is applied through a
manifold integration operation $q\oplus\delta v$.

\begin{table}[h!]
    \caption{Principal quantities and their roles.}
    \label{tab:notation}
    \centering
    \small
    \begin{tabular}{@{}p{.22\columnwidth}p{.72\columnwidth}@{}}
        \toprule
        Symbol & Definition \\
        \midrule
        $q\in\mathcal Q$, $v\in R^n$
        & Augmented configuration and generalized velocity \\

        $\rho\in R^s$, $C$
        & Full closure residual; $C=D\rho\,N$ \\

        $r$, $d=n-r$
        & Closure rank and independent velocity dimension \\

        $u\in R^d$, $E$
        & Independent speeds and lift $v=Eu$ \\

        $v_T$, $E_T$
        & Tree speeds and their lift $v_T=E_Tu$ \\

        $x$, $J$
        & Task coordinates and reduced task Jacobian \\

        $\ell$, $L$, $f$
        & Actuator displacement, reduced transmission, effort \\

        $M_r$, $h_r$
        & Reduced inertia and bias terms \\

        $W$, $g$
        & Reduced physical metric and objective differential \\
        \bottomrule
    \end{tabular}
\end{table}

\subsection{From Physical Edges to Closure Comparisons}

Choose a spanning forest $\mathcal T$ and a reference body for
each connected component. A computational reference does not
impose a physical fixation: a free-root pose remains a state
variable. The number of non-tree edges is
\begin{equation}
    \mu=|\mathcal J|-|\mathcal B|+\kappa,
\end{equation}
where $\kappa$ is the number of connected components. This cycle
rank is not the number of independent scalar constraints.

Each chord $e=(b,c)$ defines a unique ordered forest path $P_e$
between its endpoints. Let $\sigma_{ej}\in\{-1,1\}$ specify
traversal direction. Form
\begin{equation}
    T_e^+=\prod_{j\in P_e}^{\longrightarrow}X_j^{\sigma_{ej}},
    \qquad
    T_e^-=X_e,
    \qquad
    \Delta_e=(T_e^-)^{-1}T_e^+.
    \label{eq:paths}
\end{equation}
Closure requires $\Delta_e=I$. Retaining $q_e$ allows the cut
joint to execute its physical relative motion. Replacing this
equality by a weld after deleting $q_e$ would generally change
the mechanism. Joint-specific constraint elimination is possible,
but it must preserve the same admissible relative transformations.

\textit{Completeness of the edge construction}.
For fixed component-root poses and specified joint configurations,
forest propagation together with $\Delta_e=I$ for every chord
satisfies every physical edge relation in $\mathcal G$.

\textit{Proof}. Forest propagation assigns body poses satisfying all tree edges.
For a chord, $\Delta_e=I$ equates the propagated endpoint transform
with its physical joint transform. Every edge is either a tree
edge or a chord. Conversely, body poses satisfying every edge
necessarily satisfy these comparisons.

This statement establishes structural edge completeness. Feasibility, including the supplied couplings $\gamma(q)=0$, is established by the closure solve of Section~\ref{sec:closure}, and scalar independence by its rank analysis.
Figure~\ref{fig:compilation} summarizes the construction.

\subsection{Dependency Modules and Compiler Output}

A candidate local partition distinguishes independent and
dependent variables. For each closure comparison, let
$\mathcal D_e$ contain the dependent variable blocks occurring in
its ordered factors. The supplied coupling residuals are included
through their own dependent supports. An inverted variable index
connects residual blocks sharing a dependent variable; its
connected components define candidate modules. Modules may share
prescribed independent variables. Their dependent variables are
disjoint, permitting separate closure solves when those prescribed
values are fixed.

This grouping is conservative because structural dependencies can
cancel algebraically. An independent-only constraint is retained: it must be locally redundant, or the partition is revised. Likewise, an unconstrained variable is never assigned to the dependent set. The rank conditions in
Section~\ref{sec:closure} decide whether the candidate partition
yields a valid reduction.

Each emitted comparison records ordered factors, traversal signs,
coordinate indices, and source-edge identifiers. Different
comparisons in a dependency module may have different endpoints.
PACDM is therefore applied to each endpoint-matched pair and then assembled in common module coordinates. When a module forms a single common-endpoint path family, $N_p$ paths need only $N_p-1$ pairwise comparisons~\cite{dastranj2026pacdm}.

Forest construction and explicit path extraction require
$O(|\mathcal B|+|\mathcal J|+L_P)$ operations, where
$L_P=\sum_e(|P_e|+1)$. With union--find, grouping $N_c$
closure/coupling blocks costs $O(N_c+S_D\alpha(N_c))$, where $S_D$
is their total dependent-support size. These bounds cover the structural compilation stage; numerical rank analysis and closure resolution (Section~\ref{sec:closure}) are separate stages.
Listing~\ref{lst:kangaroo} shows the Kangaroo input. It contains
only physical records; the compiler rejects inputs that
pre-author paths, partitions, modules, or coordinate orders, and
generates them instead. For Kangaroo, it yields 76 joint and 64
chart coordinates in six loop modules. Each module is corrected
separately and is skipped when its motor inputs are unchanged.
Function and record names are those of the implementation;
variable names follow the notation of this paper.

\begin{lstlisting}[float=tp, label={lst:kangaroo},
  caption={Abridged Kangaroo input and compiler calls. The input
  holds physical records only; \texttt{step} returns the joint
  configuration and the joint rows of the lift $E$
  in~\eqref{eq:lift}.}]
physical = {  # Kangaroo records, abridged
  "floating_root": "base_link",
  "bodies": [...],  # 78, with mass and inertia
  "joints": [...],  # 77 edges, frames, limits
  "loop_cuts": [    # 16 point + 8 universal
    {"id": "left_hip_yaw_closed",
     "type": "point_coincidence",
     "body1": "left_yaw_rod",
     "body2": "left_hip_yaw", ...},
    {"id": "left_hip_differential_2a",
     "type": "universal", ...}, ...],
  "actuators": [    # 12 actuator ports
    {"id": "leg_left_1_actuator_torque",
     "joint_id": "leg_left_1_motor",
     "gear": 1.0,
     "force_bounds_N": [-2000.0, 2000.0]}, ...],
  "drive": {...}, "sole_sites": [...]}
  
comp = compile_graph(physical)
# generated: tree, cut paths, charts, 6 modules
solver = ModularSolver(comp, x0)  # x0: assembled
q, E, info = solver.step(q_motor)  # 12 motors
# E (joint rows); modules whose inputs are
# unchanged are skipped
\end{lstlisting}

\begin{figure*}[h!]
\centering
\includegraphics[
width=0.85\textwidth,
height=0.450\textheight,
keepaspectratio
]{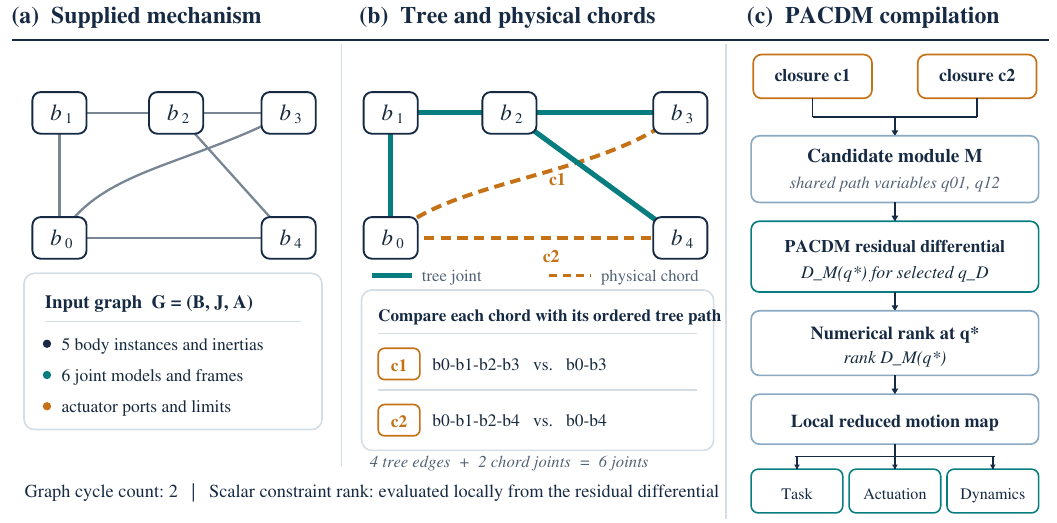}
\caption{Compilation from a supplied physical mechanism graph.
    (a) Body instances, joint attachment frames, and actuator ports
    define the input. (b) A five-body, six-joint connected graph
    admits a four-edge spanning tree and two chords; each chord is
    compared with its ordered tree path. (c) Comparisons sharing
    dependent variables form a candidate module. PACDM supplies
    the residual differential, while numerical rank determines the
    local reduction. Dashed chords remain physical joints. Graph
    cycle count and scalar constraint rank are distinct.}
\label{fig:compilation}
\end{figure*}

\section{PACDM Closure and Local Reduction}
\label{sec:closure}

\subsection{Path-Assembled Differentials}

Following PACDM~\cite{dastranj2026pacdm}, define
\begin{equation}
    \begin{gathered}
        \rho_e(q)=\Log(\Delta_e(q))^\vee,
        \qquad
        \rho(q)=\col(\col_e\rho_e(q),\gamma(q)),\\
        C(q)=D\rho(q)N(q).
    \end{gathered}
    \label{eq:closure}
\end{equation}
The logarithm is evaluated on a fixed local branch whose domain is enforced as an acceptance condition (Section~\ref{subsec:rank}). The feasible configuration set is
$\mathcal C=\{q:\rho(q)=0\}$.

For an ordered path
$T^\sigma=F_1^\sigma\cdots F_k^\sigma$, define
$Z_i^\sigma=F_1^\sigma\cdots F_{i-1}^\sigma$. Local motion maps
$\Xi_i^\sigma$ satisfy
$(\dot F_i^\sigma(F_i^\sigma)^{-1})^\vee=\Xi_i^\sigma v$.
Product differentiation gives
\begin{equation}
    (\dot T^\sigma(T^\sigma)^{-1})^\vee=H^\sigma v,
    \qquad
    H^\sigma=\sum_i\Ad_{Z_i^\sigma}\Xi_i^\sigma.
\end{equation}
Consequently, the differential of each residual is
\begin{equation}
    C_e=J_l^{-1}(\rho_e)
    \Ad_{(T_e^-)^{-1}}(H_e^+-H_e^-),
    \label{eq:pacdm}
\end{equation}
where $J_l$ is defined by
$(\dot\Delta\Delta^{-1})^\vee=J_l(\rho)\dot\rho$.
Equation~\eqref{eq:pacdm} retains the logarithm differential away
from closure; substituting $J_l^{-1}=I$ there is generally
incorrect. Rigid factors contribute through the adjoint
transports even when their own derivatives vanish. The supplied
coupling rows contribute $D\gamma(q)N(q)$ to $C$.

\subsection{Rank, Recovery, and Continuation}
\label{subsec:rank}

Consider a neighborhood of a feasible configuration where
$\rank C=r$ is constant. A permutation $\Pi$ partitions the
velocity into $\Pi v=\col(v_a,v_p)$, with $v_p\in\R^r$. Choose a
fixed row selector $S\in\R^{r\times s}$ such that
\begin{equation}
    \begin{gathered}
        C\Pi^\top=[C_a\ C_p],
        \qquad
        \det R_p\ne0,\\
        R_a=SC_a,
        \qquad
        R_p=SC_p.
    \end{gathered}
    \label{eq:partition}
\end{equation}
The selected rows must span the full row space locally. Then
\begin{equation}
    v=E(q)u,
    \qquad
    E=\Pi^\top
    \begin{bmatrix}
        I_d\\
        -R_p^{-1}R_a
    \end{bmatrix},
    \qquad
    CE=0,
    \label{eq:lift}
\end{equation}
with $u=v_a$ and $d=n-r$. These are independent speeds, not
necessarily derivatives of a global coordinate vector. The
corresponding position solve is performed in compatible local
configuration charts.

Rank-revealing factorizations select rows and candidate dependent
blocks using declared length and velocity scales. Linear systems
are solved without explicitly forming $R_p^{-1}$. A singular $R_p$ with unchanged $\rank C$ calls for a new velocity partition, that is, a computational reconfiguration (Section~\ref{subsec:reconfig}); a change in $\rank C$ itself is a distinct, mechanism-level event. The complete
residual is checked after every accepted solve. 

For acquisition, PACDM uses an artificial defect~\cite{dastranj2026pacdm}.
Given $q^0$, set
\begin{equation}
    \begin{split}
        D_e(\eta)
        &=\Exp\big((1-\eta)\Log\Delta_e(q^0)\big),\\
        \rho_e^\eta(q)
        &=\Log\big(D_e(\eta)^{-1}\Delta_e(q)\big)^\vee,
        \qquad \eta\in[0,1].
    \end{split}
    \label{eq:homotopy}
\end{equation}
Coupling residuals use
$\gamma^\eta(q)=\gamma(q)-(1-\eta)\gamma(q^0)$.
The initial state then satisfies the artificial problem at
$\eta=0$. Continuation removes the defect while prescribed
independent variables are held fixed. Acquisition uses its own
independent row set and requires a regular feasible continuation
path. At $\eta=1$, all physical residuals are checked and the
physical reduction is recomputed; acquisition and physical row
selectors need not coincide.

During motion, the lift predicts
$q^-_{k+1}=q_k\oplus(hE(q_k)u_k)$, followed by a local closure
correction. Recovery is invoked when direct correction fails.
Joint limits, the selected assembly branch, and the logarithm
domain remain acceptance conditions.

\section{Motion, Actuation, and Dynamics Interfaces}
\label{sec:interfaces}

\subsection{Task and Actuator Maps}

For a task $x=f_x(q)\in\R^m$ expressed in a fixed local task chart,
let $J_x=D f_x\,N$. A differentiable actuator displacement map
$\ell=f_\ell(q)\in\R^{n_f}$ similarly gives
$J_\ell=D f_\ell\,N$. The compiled interfaces are
\begin{equation}
    \dot x=Ju,\quad J=J_xE,
    \qquad
    \dot\ell=Lu,\quad L=J_\ell E.
    \label{eq:interfaces}
\end{equation}
Actuator displacement can be a cylinder extension or a local motor
angle. Transmissions without a holonomic displacement map are included through an explicitly supplied velocity map and its effort dual.

If $f\in\R^{n_f}$ is the actual effort conjugate to $\dot\ell$,
virtual work gives
\begin{equation}
    \tau_r=L^\top f,
    \qquad
    u^\top\tau_r=\dot\ell^\top f.
    \label{eq:power}
\end{equation}
A commanded effort and an actual effort are different quantities
when drive dynamics are present. Such dynamics enter through
specified states $z$ and commands $c$, for example
$\dot z=f_z(z,c,q,v)$ and $f=f_f(z,c,q,v)$. These constitutive laws are supplied as model inputs alongside the mechanism graph. Friction and transmission losses enter as additional terms; Eq.~\eqref{eq:power} holds at the declared mechanical ports.

\subsection{Physical Inertia and Reduced Dynamics}

Closure coordinates attached to cut joints do not create
additional bodies or mass. Let $q_T=\pi_T(q)$ denote the tree
configuration and let $v_T=K_T(q)v$ be its differential velocity
map. With $N_T$ denoting the tree configuration-rate map,
compatibility requires
$D\pi_T(q)N(q)=N_T(q_T)K_T(q)$. Define $E_T=K_TE$.
The tree model contains each physical body's inertia exactly once.
Its applied effort is decomposed as
$\tau_T=\tau_{a,T}+\tau_{\mathrm{ext},T}$, with
$E_T^\top\tau_{a,T}=L^\top f$ and
$\tau_{r,\mathrm{ext}}=E_T^\top\tau_{\mathrm{ext},T}$.
Its constrained equations are
\begin{equation}
    M_T\dot v_T+h_T=\tau_T+J_{c,T}^\top\lambda,
    \qquad
    J_{c,T}E_T=0,
\end{equation}
where $\lambda$ represents ideal internal closure reactions and
$h_T$ includes the tree's bias terms~\cite{featherstone}. Since
$v_T=E_Tu$, projection yields
\begin{equation}
    \begin{split}
        M_r\dot u+h_r&=L^\top f+\tau_{r,\mathrm{ext}},\\
        M_r&=E_T^\top M_TE_T,\\
        h_r&=E_T^\top\big(h_T+M_T\dot E_Tu\big).
    \end{split}
    \label{eq:dynamics}
\end{equation}
The derivative $\dot E_T$ is taken along the physical trajectory
and includes any variation of $K_T$. Omitting $M_T\dot E_Tu$
changes the acceleration model. External forces, including
contact forces when present, remain in $\tau_{r,\mathrm{ext}}$
rather than disappearing with internal loop reactions.

Cut-joint actuator efforts are applied through physical attachment wrenches or an equivalent local transmission, so every effort acts on retained physical coordinates.

Positive definiteness of $M_r$ requires a positive physical
inertia on the image of $E_T$ and full column rank of that lift.
Auxiliary gauge freedoms must be removed before an inertia
inverse is used. Moving actuator bodies retain their own inertia, and supplied rotor or reflected-inertia models are counted once, consistently with the tree model. These expressions follow standard constraint embedding~\cite{chignoli2025propagation,volpi}; RoboCompiler supplies the compatible maps $E_T$ and $L$ and the curvature term automatically from the physical graph.
Listing~\ref{lst:panda} shows these calls for Panda. The equal
finger displacements enter as the affine coupling
$\gamma(q)=q_{\mathrm{f}2}-q_{\mathrm{f}1}=0$, and the gripper
motor is a single actuator port acting on both fingers.

\begin{lstlisting}[float=tp, label={lst:panda},
  caption={Abridged Panda input and reduced-dynamics calls. Here
  $v_T$ and $c_T$ are the tree rows of $v=Eu$ and of the curvature
  term, $f$ collects the actuator efforts, and $w$ is the tool
  wrench with Jacobian $J_w$; the last line evaluates the
  virtual-power identity~\eqref{eq:power}.}]
physical = {  # Panda records, abridged
  "joints": [...],  # 11 tree edges, 9 moving
  "affine_couplings": [{"master": "finger_joint1",
     "slave": "finger_joint2",
     "multiplier": 1.0, "offset": 0.0}],
  "actuators": [...,  # 8 actuator ports
     {"id": "actuator8", "transmission": {
       "finger_joint1": 0.5,
       "finger_joint2": 0.5}}],
  "tool": {"body": "hand", "T_body_tool": ...},
  "redundancy_joint": "joint3", ...}

comp = compile_graph(physical)  # q_f2 = q_f1
B = comp.cmg["actuation"]["moment_matrix"]
g = GeneratedTaskGraph(comp)    # 6-D tool target
# lift E, velocity v = E u, curvature term c:
E, v, c, _ = curvature(g, g.lift(q), u)
ET = g.physical_map(E)          # tree rows of E
tau = B @ f + Jw.T @ w          # ports and wrench
Mr = ET.T @ MT @ ET             # reduced inertia
ud = solve(Mr, ET.T @ (tau - hT - MT @ cT))
# virtual-power defect:
defect = tau @ vT - (ET.T @ tau) @ u
\end{lstlisting}

\section{Intrinsic Velocity Selection and Reconfiguration}
\label{sec:control}

\subsection{A Physical Secondary Objective}

At a regular task value, the task fiber is
\begin{equation}
    \mathcal F_x=\{q\in\mathcal C:f_x(q)=x\},
    \qquad
    \dim\mathcal F_x=d-\rank J.
\end{equation}
Passive loop coordinates do not increase this dimension.
Let $\Phi(q)$ be a differentiable physical preference, and define
$g=E^\top N^\top D\Phi^\top$, so that $\dot\Phi=g^\top u$.
For instance, with finite actuator stroke limits and
$s_i=(\ell_i-\ell_i^-)/(\ell_i^+-\ell_i^-)$, an interior preference
is
\begin{equation}
    \Phi(q)=-\sum_i\beta_i\log\big(s_i(1-s_i)\big),
    \qquad
    \beta_i>0.
    \label{eq:objective}
\end{equation}
This objective penalizes proximity to the specified stroke limits.

Choose a positive physical motion metric whose reduced matrix is
$W\succ0$. One admissible choice is $W=M_r$ when its definiteness
conditions hold. For task error $e_x=x-x_d$ in the chosen task
chart, prescribe $w=\dot x_d-K_xe_x$, where $K_x\succ0$.
In the compiled coordinates, the constrained velocity problem (Fig.~\ref{fig:equivalence}) is
\begin{equation}
    \begin{aligned}
        u^*=\arg\min_u\quad
        &\tfrac12 u^\top Wu+k_\Phi g^\top u\\
        \text{subject to}\quad
        &Ju=w,\qquad Au\le b,
    \end{aligned}
    \label{eq:qp}
\end{equation}
with $k_\Phi>0$ and physically specified bounds. For example,
actuator speed bounds use $A=\col(L,-L)$ and
$b=\col(\dot\ell^+,-\dot\ell^-)$. Other kinematic restrictions may
be appended with their correct reduced maps. Feasibility and
$W\succ0$ imply a unique solution. An infeasible task is rejected explicitly or handled by a declared relaxation policy, so the task equality is never softened silently.

Without inequalities and with full row rank $J$,
Eq.~\eqref{eq:qp} reduces to
\begin{equation}
    \begin{split}
        J_W^\#&=W^{-1}J^\top(JW^{-1}J^\top)^{-1},\\
        u^*&=J_W^\#w-k_\Phi(I-J_W^\#J)W^{-1}g.
    \end{split}
\end{equation}
This is weighted redundancy resolution~\cite{hqp,redundancy,pbds}, posed here directly in the compiled independent coordinates with the reduced physical metric $W$. Under exact feasible
execution, $\dot e_x=-K_xe_x$. At a stationary task with $w=0$
and no inequalities,
\begin{equation}
    \dot\Phi=-k_\Phi g^\top
    \big(
    W^{-1}
    -W^{-1}J^\top(JW^{-1}J^\top)^{-1}JW^{-1}
    \big)g
    \le0.
\end{equation}
The physical preference $\Phi$ is therefore non-increasing along the resulting self-motion. Floating-base or otherwise underactuated systems additionally require a dynamic/contact feasibility layer to realize this velocity with admissible actuator efforts and contact forces.

\subsection{Consistency Under Representation Changes}

Two representations are equivalent here when they describe the
same physical configurations, body velocities, actuator ports,
and task on an overlapping regular branch. Use $\mathcal V=Bu$
for the stacked physical body velocities in common frames.
After removing auxiliary gauge freedoms, assume both
representations give full-rank bases of the same tangent space.
There is then an invertible $F(q)$ such that
\begin{equation}
    \widetilde B=BF,
    \qquad
    u=F\widetilde u.
\end{equation}
The corresponding reduced quantities must satisfy
\begin{equation}
    \begin{gathered}
        \widetilde J=JF,\quad
        \widetilde L=LF,\quad
        \widetilde A=AF,\quad
        \widetilde W=F^\top WF,\\
        \widetilde g=F^\top g,
        \qquad
        \widetilde\tau_r=F^\top\tau_r.
    \end{gathered}
    \label{eq:covariance}
\end{equation}
Task coordinates, the physical objective, and bounds are held
fixed. Changing their physical meaning is not a
reparameterization.

\begin{proposition}[Motion and power consistency]
Under the preceding equivalence assumptions, with a feasible
problem~\eqref{eq:qp} and $W\succ0$, the transformed problem has a
unique solution satisfying $u^*=F\widetilde u^*$. Both
representations yield the same body velocities, task velocity,
actuator velocity, and mechanical power.
\end{proposition}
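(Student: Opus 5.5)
The plan is to show that the substitution $u=F\widetilde u$ is a bijection between the feasible sets of the two problems and carries one objective onto the other. Uniqueness on each side then comes from strict convexity, and all physical outputs are compared through this substitution.

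\textbf{Step 1: the map $\widetilde u\mapsto F\widetilde u$ is a bijection of feasible sets.} Since $F(q)$ is invertible, this map is a linear bijection of $\R^d$. By the transformation rules~\eqref{eq:covariance}, $\widetilde J\widetilde u=JF\widetilde u$ and $\widetilde A\widetilde u=AF\widetilde u$. The right-hand sides $w$ and $b$ are unchanged, because task coordinates, the task law and bounds are held fixed. Hence $\widetilde u$ is feasible for the transformed problem if and only if $u=F\widetilde u$ is feasible for~\eqref{eq:qp}. In particular, the transformed problem is feasible.

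\textbf{Step 2: the objectives agree, so the minimizers correspond.} Using $\widetilde W=F^\top WF$ and $\widetilde g=F^\top g$,
\[
\tfrac12\widetilde u^\top\widetilde W\widetilde u+k_\Phi\widetilde g^\top\widetilde u=\tfrac12(F\widetilde u)^\top W(F\widetilde u)+k_\Phi g^\top(F\widetilde u).
\]
The transformed objective is therefore the original objective composed with the bijection. Because $F$ is invertible, $\widetilde W\succ0$ follows from $W\succ0$. The feasible set is a nonempty closed convex polyhedron, and the objective is strictly convex and coercive. So each problem has exactly one minimizer, and the bijection must send the unique minimizer $\widetilde u^*$ to $u^*$. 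This gives $u^*=F\widetilde u^*$.

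\textbf{Step 3: the physical outputs coincide.} We compute each quantity in turn.
\begin{itemize}
\item Body velocities: $\widetilde B\widetilde u^*=BF\widetilde u^*=Bu^*$.
\item Task velocity: $\widetilde J\widetilde u^*=Ju^*$.
\item Actuator velocity: $\widetilde L\widetilde u^*=Lu^*$.
\item Mechanical power: $\widetilde u^{*\top}\widetilde\tau_r=\widetilde u^{*\top}F^\top\tau_r=u^{*\top}\tau_r$.
\end{itemize}
By~\eqref{eq:power}, $u^{*\top}\tau_r$ equals $\dot\ell^\top f$, and $\dot\ell$ is the same in both representations. The port power is thus identical as well, provided the effort $f$ at the shared ports is the same physical quantity. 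That holds by the equivalence assumption that both representations describe the same actuator ports. This also agrees with $\widetilde\tau_r=\widetilde L^\top f=F^\top L^\top f$.

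\textbf{Main obstacle.} The algebra is routine. The delicate point is justifying that the rules in~\eqref{eq:covariance} are forced rather than merely postulated. The key fact is that $J$, $L$ and $A$ are obtained by composing fixed physical maps with the velocity basis. Since $\widetilde B=BF$ means the lifts $E$ are related by $F$ after gauge removal, these maps transform by right multiplication with $F$. The metric $W$ and differential $g$ are pullbacks of a fixed physical metric and a fixed differential $D\Phi$, which gives the $F^\top(\cdot)F$ and $F^\top(\cdot)$ rules. I would state briefly that full column rank of $B$ makes $F$ unique, taking $F=(B^\top B)^{-1}B^\top\widetilde B$. That uniqueness is what makes the correspondence of minimizers well defined.
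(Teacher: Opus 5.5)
Your proof is correct and follows essentially the same route as the paper. The bijection $u=F\widetilde u$ matches the feasible sets and both objective terms via \eqref{eq:covariance}, and strict convexity gives the correspondence $u^*=F\widetilde u^*$ of the unique minimizers. Substitution into $Bu$, $Ju$, $Lu$ and $\widetilde u^\top\widetilde\tau_r=u^\top\tau_r=f^\top\dot\ell$ then yields the physical equalities, while your extra details (existence by coercivity on a nonempty closed polyhedron, $\widetilde W\succ0$, and uniqueness of $F$ from full column rank of $B$) only make explicit what the paper leaves implicit.
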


Proof.
The bijection $u=F\widetilde u$ maps the feasible sets onto each
other by Eq.~\eqref{eq:covariance}, and preserves both terms of the
objective. Strict convexity gives the stated correspondence of
minimizers. Substitution into $Bu$, $Ju$, and $Lu$ preserves
physical velocities. Finally,
$\widetilde u^\top\widetilde\tau_r
=u^\top\tau_r=f^\top\dot\ell$.

The proposition carries coordinate covariance through all generated task, effort, and inequality interfaces and specifies the transformation contract~\eqref{eq:covariance}. In particular, replacing a transformed
physical metric by the identity in every chart generally changes
the optimization problem.

For dynamics, configuration-dependent $F$ also produces an
acceleration term:
\begin{equation}
    \widetilde M_r=F^\top M_rF,
    \qquad
    \widetilde h_r=F^\top(h_r+M_r\dot F\widetilde u).
    \label{eq:dynamicchange}
\end{equation}
Therefore, matching reduced mass matrices alone is insufficient
to establish equivalent dynamic equations.

\begin{figure*}[h!]
\centering
\includegraphics[
width=0.85\textwidth,
height=0.450\textheight,
keepaspectratio
]{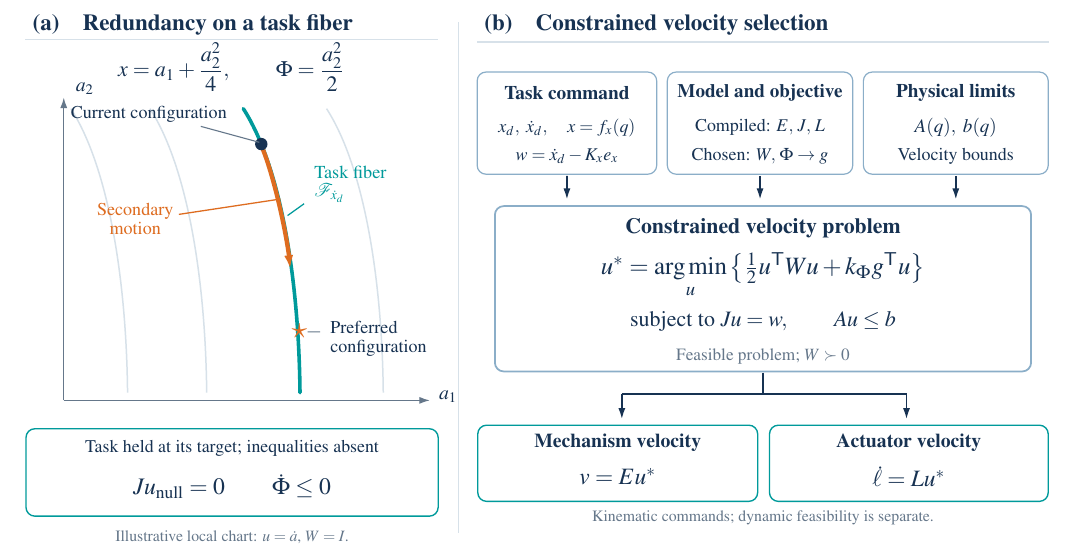}
  \caption{Constrained velocity selection with the compiled interfaces. (a) On a task fiber, the secondary objective $\Phi$ generates self-motion toward a preferred configuration while the task is held ($\dot\Phi\le0$); illustrative local chart with $u=\dot a$ and $W=I$. (b) Task commands, the compiled maps $E$, $J$, and $L$, the chosen metric $W$ and objective differential $g$, and physical velocity bounds define problem~\eqref{eq:qp}; its solution gives the mechanism velocity $v=Eu^*$ and actuator velocity $\dot\ell=Lu^*$.}
    \label{fig:equivalence}
\end{figure*}

\subsection{Three Meanings of Reconfiguration}
\label{subsec:reconfig}

\emph{Model reconfiguration} changes dimensions, attachments, or joint/module data between runs. It is handled by recompiling the affected paths, closures, transmissions, and inertia mappings from the edited physical records; the recompiled model describes the changed mechanism.

\emph{Computational reconfiguration} changes a tree, cycle basis,
or valid local velocity partition while preserving the physical
mechanism. On an overlap, state and speed must be transferred
through their actual coordinate transformations, including
$u=F\widetilde u$. A new partition is accepted only after complete
closure and rank conditions are re-established. The proposition
applies to a valid switch.

\emph{Physical reconfiguration during motion} adds or removes actual connections. Latching, grasping, impacts, and changing contact modes alter the admissible velocities. RoboCompiler compiles each support mode as a fixed-topology closure and schedules switches between compiled modes, as in the Go2 reference with 110 support transitions (Section~\ref{subsec:exp_tasks}); in the reported executions, impacts and contact transitions are resolved by the contact models of the executing backends. Nonholonomic rolling constraints act at the velocity level and can enter through a separate motion layer.

\section{Experimental Evaluation}
\label{sec:experiments}

\subsection{Robots, Environments, and Evaluation Protocol}
We evaluate RoboCompiler on five simulated robots
(Table~\ref{tab:exp_robots} and Fig. \ref{robots}). The industrial excavator, Kangaroo,
and six-UPS Stewart platform contain physical closed chains.
Go2 and Panda have physical trees augmented by foot-task and
support constraints, or by a finger coupling and tool target.
The excavator and Stewart compilers select trees and cuts from physical records; Kangaroo uses declared cuts.
The systems exercise moving actuator bodies, coordinate
partitions, floating bases, and task-dependent constraints.

MuJoCo and Isaac Sim/PhysX integrate native joints and loop
constraints, together with unilateral contact where applicable.
Pinocchio supplies independent rigid-body and constrained-dynamics
calculations; its task executions use separately implemented
contact or external-load models. Within each matched experiment,
the physical parameters, references, and actuator definitions
are held fixed. The task
studies use MuJoCo~3.3.7, Pinocchio~3.8.0, and Isaac Sim~6.1;
the separate solver comparison in Table~\ref{tab:rc_dynamics}
also tests Pinocchio~4.1 and MuJoCo~3.14.0. The Kangaroo, Panda, and Go2 models are derived from public MuJoCo models~\cite{kangaroo_sim2sim,menagerie}.

\begin{table*}[!t]
\centering\small
\setlength{\tabcolsep}{4pt}
\caption{Evaluation systems and nominal native Isaac Sim/PhysX
executions. The last column reports task-specific metrics.}
\label{tab:exp_robots}
\begin{tabular}{@{}lp{0.23\textwidth}p{0.27\textwidth}p{0.24\textwidth}@{}}
\toprule
Robot & Mechanism or imposed constraints & Task & Native PhysX result \\
\midrule
Industrial excavator \cite{pakkila2017modeling} & Nine revolute cuts; seven independent arm speeds & Excavation and settled soil transfer & 92.153 kg delivered; 11.46-$\mu$m peak arm gap \\
Unitree Go2 & Tree; four foot targets and two- to four-foot support & 26-s rail, turn, gate, and docking course & 6.926-mm base-position RMS ($t\geq2$ s) \\
Franka Panda & Tree; equal finger displacements and six-dimensional tool target & 22-s grasp, inspection, and keyed-socket placement & 0.133-mm final planar placement error \\
Kangaroo & Sixteen point and eight universal cuts; six base and twelve motor speeds & 10-s landing, crouch, and push recovery & 120.078-mm crouch; 0.629-$\mu$m peak point gap \\
Six-UPS Stewart & Six limbs with spherical closure; six independent lengths & 22-s six-axis inspection with platform wrenches & 0.717-mm platform-position RMS; 6.454-$\mu$m peak gap \\
\bottomrule
\end{tabular}
\end{table*}

\begin{figure*}[!t]
\centering
\includegraphics[width=0.75\textwidth]{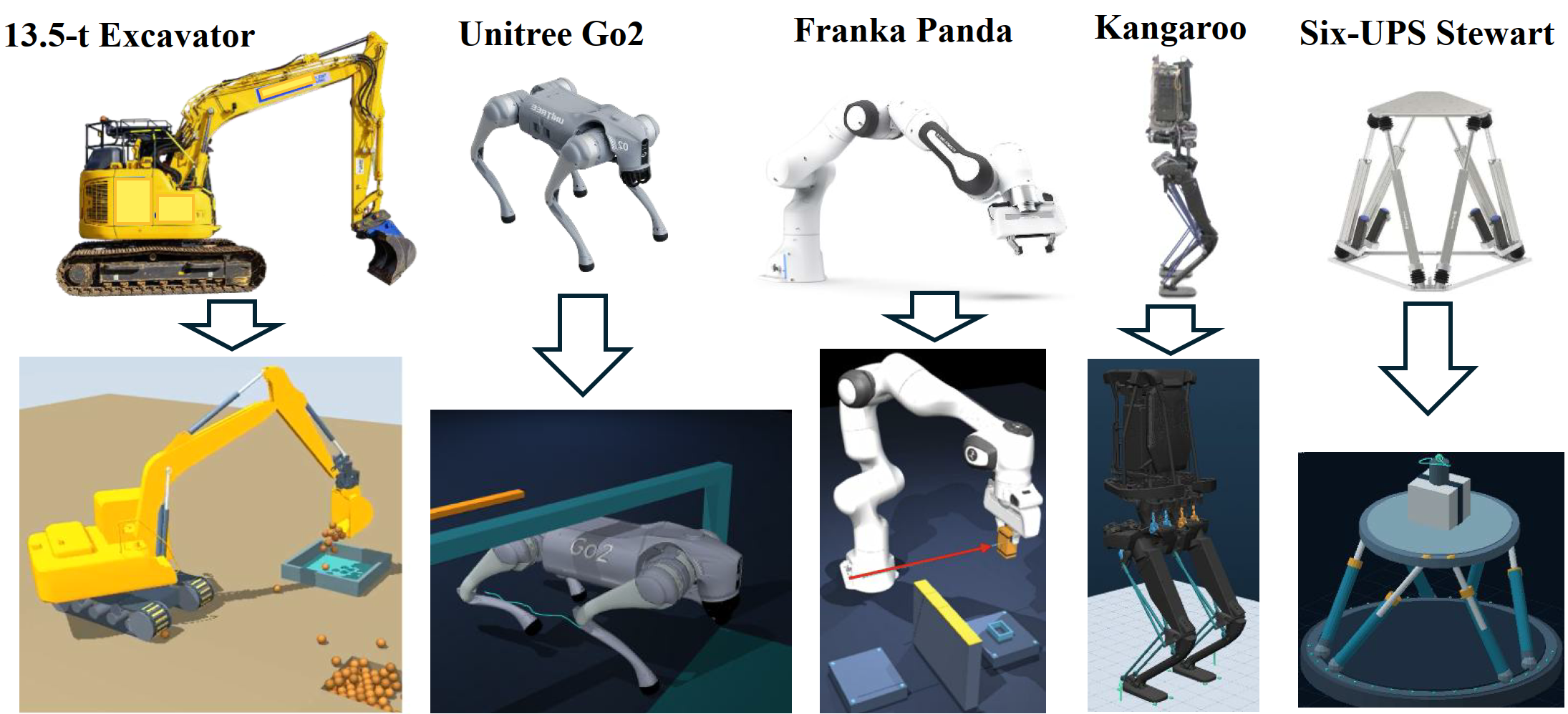}
\caption{The five evaluation platforms (top) and the simulation models used in all experiments (bottom): the excavator, Unitree Go2, Franka Panda, Kangaroo, and six-UPS Stewart platform.}
\label{robots}
\end{figure*}

The evaluation tests mechanical consistency, computational
savings, use of the actuation interface, and reuse in independent
plants. Statewise
ideal-support checks are separated from executions with changing
native contact. We likewise distinguish physical endpoint gaps
from augmented closure residuals. 

\subsection{Representation and Mechanical Consistency}
\label{subsec:exp_mechanics}

\textit{Compilation under changed data and representations.}
For Stewart, twelve variants covering geometry, payload, inertia,
body-frame expression, and record or actuator order are compiled
under all six spherical tree-joint choices. All 72 models pass
five feasible-configuration checks, giving 360 witnesses.
The generated physical tree has 21 rather than 24 coordinates,
while retaining six independent actuator lengths (Listing~\ref{lst:stewart}). Between paired
representations of the same mechanism, the largest normalized
reduced-inertia discrepancy is $3.77\times10^{-15}$ and the
largest task-map entry difference is $2.89\times10^{-15}$;
the maximum physical closure gap is $5.72\times10^{-16}$ m.
At perturbed off-manifold states, generated differentials agree
with centered directional differences within
$3.01\times10^{-10}$, and their observed nonzero entries remain
inside the generated sparsity pattern.

\begin{lstlisting}[float=tp, label={lst:stewart},
  caption={Abridged Stewart input and compiler calls. The six
  platform joints are ordinary spherical records. For each choice,
  the compiler places that joint in the tree as three chart
  coordinates and closes the other five as chords; $E$ is the
  lift in~\eqref{eq:lift}.}]
physical = {  # Stewart records, abridged
  "bodies": [...],  # 21, with mass and inertia
  "joints": [       # 25 undirected edges
    {"id": "leg_0_length", "type": "prismatic",
     "body_a": "leg_0_barrel",
     "body_b": "leg_0_rod", ...},
    {"type": "spherical", "body_a": "leg_0_rod",
     "body_b": "platform", ...},  # six such
    ...],
  "actuators": [...]}  # six leg-force ports

for s in spherical_ids:  # six tree-joint choices
    comp = compile_graph(
        physical, preferred_spherical_tree=s)
    # tree: 18 scalar + 3 chart coordinates;
    # 5 spherical chords: 30 residual rows
    graph = comp.graph(q0)  # q0: closed seed
    E, info = PACDM(graph).mapping(graph.lift(q0))
\end{lstlisting}

The excavator compiler is checked on 120 closed configurations:
twelve physical-data variants, two independent-coordinate
partitions, and five configurations per pair. Both joint and
cylinder-stroke partitions retain seven independent speeds.
Across 48 dynamics states, their projected accelerations agree
with independently assembled KKT and native Pinocchio solutions
to $5.7\times10^{-13}$ in the force-equivalent metric defined
below; the two partitions agree to $1.94\times10^{-15}$.
For Go2, twelve physical-data variants generate 144 task/support
plans and 720 passing configuration checks. These checks compare valid representations of each edited
mechanism, exercising recompilation from physical records.

\textit{Independent constrained dynamics.}
Let $a=\dot v_T$ denote generalized acceleration in the
common tree-velocity convention. For the comparison in
Table~\ref{tab:rc_dynamics}, define $\bar a=a$ for Stewart
and the excavator. For Kangaroo, $\bar a$ stacks the pelvis
classical linear and angular accelerations in world axes
and the 76 joint accelerations, avoiding dependence on the
floating-base parameterization. Against the corresponding
reference quantities, we report
\begin{equation}
\begin{aligned}
\delta_a&=
\frac{\|\bar a-\bar a_{\rm ref}\|_\infty}
{\max(1,\|\bar a_{\rm ref}\|_\infty)},\\
\delta_f&=
\frac{\|M_T(a-a_{\rm ref})\|_\infty}
{\max(1,\|\tau_T\|_\infty,\|h_T\|_\infty)}.
\end{aligned}
\label{eq:exp_errors}
\end{equation}
The force metric uses generalized accelerations and the
reference inertia and bias terms. Coordinates and SI
conventions are matched within each model. With physical
constraint matrix $J_{c,T}$ and drift $\gamma_{c,T}$,
the acceleration-level residual is
$r_c=\|J_{c,T}a+\gamma_{c,T}\|_\infty$.
Its components retain the units of their constraint rows, and all three metrics are compared within each robot.

\textit{Sampling and reference.}
Table~\ref{tab:rc_dynamics} uses sets~A and~B.
Set~A contains 1101 recorded Stewart reference knots and
48 recorded dynamics witnesses each for the excavator
and Kangaroo. Set~B adds 300, 252, and 152 states,
respectively, with configurations sampled from the same
reference motions and randomized independent velocities,
actuator efforts, and external wrenches. Independent
velocities are mapped through $v_T=E_T(q)u$.

At each state, Pinocchio~4.1 evaluates $M_T$, $h_T$,
$J_{c,T}$, and $\gamma_{c,T}$ in double precision.
The reference uses a 50-significant-digit SVD null-space
solve, retaining singular values above
$10^{-9}\sigma_1$ and treating redundant rows in the
least-squares sense at the selected numerical rank.
Returned accelerations are rounded to double precision.
Extended precision reduces solver round-off in the reference.

Every method receives the same physical configuration,
generalized velocity, applied efforts, and support mode,
converted to its native coordinate conventions.
Table~\ref{tab:rc_sampling} specifies the recorded and
randomized states. Configurations are sampled from the
1101-knot Stewart reference (0--22~s at 20-ms intervals),
the 4201-sample excavator route, and the 401-sample
Kangaroo reference motion. Randomized states use a NumPy
generator initialized with seed 20260927 for each robot, with a fixed draw order.

Independent velocities are mapped through $v_T=E_T(q)u$.
The independent variables are the six actuator lengths
for Stewart, the seven inputs of the excavator's joint-space
partition, and the coordinates selected by Kangaroo's
support plan. Kangaroo has 18, 12, or 6 independent speeds
in floating, single-support, or double-support mode,
respectively. Ideal sole welds are anchored at the current
sole frames. External wrenches are expressed in world axes
and applied at the platform origin, bucket frame, or pelvis
origin for Stewart, the excavator, or Kangaroo, respectively.

\begin{table*}[t]
\centering
\footnotesize
\setlength{\tabcolsep}{4pt}
\caption{Sampling protocol for the dynamics comparison.
$\mathcal U(\pm a)$ denotes independent uniform draws
in $[-a,a]$ for each component. Set~A contains recorded
states; set~B contains randomized velocities, efforts,
and wrenches at configurations sampled from the reference
motions.}
\label{tab:rc_sampling}
\begin{tabular}{@{}
>{\raggedright\arraybackslash}p{0.10\textwidth}
>{\raggedright\arraybackslash}p{0.25\textwidth}
>{\raggedright\arraybackslash}p{0.59\textwidth}@{}}
\toprule
Robot & Set A: recorded states & Set B: randomized states \\
\midrule
Stewart
&
All 1101 reference knots, with recorded actuator
velocities and feedforward forces; no external wrench.
&
300 states. Reference knot sampled uniformly;
actuator velocities $\mathcal U(\pm0.25)$~m/s;
actuator forces $\mathcal U(\pm400)$~N;
external force components $\mathcal U(\pm120)$~N
and moment components $\mathcal U(\pm12)$~N\,m.
\\
\addlinespace
Excavator
&
48 dynamics witnesses at route indices evenly spaced
from 20 to 4181.
&
252 states. Route index sampled uniformly from
20--4180; independent velocities
$\mathcal U(\pm0.3)$~rad/s.
Each effort is sampled as $\mathcal U(\pm1)$ times
its scale: 5, 5, 30, 150, 150, and 500~kN for the
six cylinders, and 1 and 25~kN\,m for the rotary drives.
External force components are $\mathcal U(\pm5)$~kN
and moment components are $\mathcal U(\pm2)$~kN\,m.
\\
\addlinespace
Kangaroo
&
48 dynamics witnesses at motion indices evenly spaced
from 8 to 392, cycling through floating, left-sole,
right-sole, and double-support modes.
&
152 states, with 38 per support mode.
Motion index sampled uniformly from 8--392;
independent velocities $\mathcal U(\pm0.12)$ in
their respective m/s or rad/s units;
motor efforts $\mathcal U(\pm200)$~N.
External force components use ranges
$\pm15$, $\pm15$, and $\pm10$~N;
moment components use $\mathcal U(\pm2)$~N\,m.
\\
\bottomrule
\end{tabular}
\end{table*}

The Stewart velocity interval covers the recorded peak
actuator speed of approximately 0.23~m/s, while the
sampled forces remain within the 900-N actuator limit.
Together, sets~A and~B contain 1401 Stewart, 300 excavator,
and 200 Kangaroo states.

\textit{Reference construction and numerical rank.}
Let
$J_{c,T}=U\Sigma V^\top$,
and let $U_r$, $V_r$, and $\Sigma_r$ contain the retained
singular vectors and values. With $Z$ containing the
remaining right singular vectors, the reference is
computed as
\begin{equation}
\begin{aligned}
a_p &=-V_r\Sigma_r^{-1}U_r^\top\gamma_{c,T},\\
(Z^\top M_TZ)y
    &=Z^\top(\tau_T-h_T-M_Ta_p),\\
a_{\rm ref}&=a_p+Zy.
\end{aligned}
\label{eq:rc_reference}
\end{equation}
The particular acceleration $a_p$ uses a truncated-SVD
least-squares treatment of the constraint equations.
At the selected numerical rank, positive definiteness
of $Z^\top M_TZ$ gives a unique projected solution.

The retained rank is 18 of 18 rows for Stewart and
16 of 54 rows for the excavator. For Kangaroo, it is
64 of 80 rows in floating motion, 70 of 86 rows with
one welded sole, and 76 of 92 rows with both soles
welded. These ranks are unchanged across the sampled
states and agree with the double-precision rank checks.

\textit{Reference sensitivity.}
To distinguish solver sensitivity from the pelvis-based
Kangaroo comparison in Table~\ref{tab:rc_dynamics}, define
the normalized tree-coordinate acceleration discrepancy
\begin{equation}
\delta_{\rm tree}
=
\frac{\|a-a_{\rm ref}\|_\infty}
{\max(1,\|a_{\rm ref}\|_\infty)}.
\label{eq:rc_tree_error}
\end{equation}
The largest differences between double- and
extended-precision solves are $1.6\times10^{-13}$,
$4.4\times10^{-9}$, and $1.2\times10^{-11}$ in
$\delta_{\rm tree}$ for Stewart, the excavator,
and Kangaroo, respectively. These values measure
solver sensitivity with the model inputs held fixed.

For the excavator, alternative well-conditioned
independent subsets of its redundant constraint rows
change the reference by up to $6.5\times10^{-10}$ in
$\delta_a$, with a median of $4.1\times10^{-11}$,
and by up to $2.3\times10^{-12}$ in $\delta_f$.
These observed spreads provide an empirical estimate
of reference sensitivity, rather than a rigorous error
bound. Differences below this estimated floor indicate
agreement with the chosen reference without resolving
an absolute accuracy ordering.

RoboCompiler's maximum $\delta_a$ is $1.6\times10^{-8}$,
$5.1\times10^{-7}$, and $4.4\times10^{-12}$ for Stewart,
the excavator, and Kangaroo. These are smaller than the tested
MuJoCo~3.14.0 maxima by approximately $4.2\times10^5$,
$4.7\times10^4$, and $6.1\times10^{10}$. Because MuJoCo
uses regularized equalities, the ratios measure agreement with
the hard-constraint reference under the reported settings.
Against Pinocchio's \texttt{lcaba} with $\mu=10^{-6}$ (with the Schur recipe on Kangaroo), RoboCompiler attains lower median $\delta_a$ and maximum $\delta_f$ on all three mechanisms and lower maximum $\delta_a$ on the excavator and Kangaroo; on Kangaroo, it also improves on \texttt{lcaba} with $\mu=10^{-2}$ in every accuracy metric. Pinocchio's \texttt{constraintDynamics}, with the stated Kangaroo Schur recipe, attains the smallest discrepancies overall.

For the excavator, RoboCompiler's maximum $\delta_f=1.1\times10^{-12}$
lies below the estimated $2.3\times10^{-12}$ reference floor arising
from alternate treatments of redundant rows, as do the $\delta_f$
entries of \texttt{lcaba} with $\mu=10^{-2}$ and
\texttt{constraintDynamics}; at the reference resolution, these three
solvers are indistinguishable in $\delta_f$. Call times compare
RoboCompiler's Python reference implementation with compiled-library
calls on precomputed arrays (except for the complete Kangaroo Schur
recipe) and therefore reflect implementation-specific costs with
different scopes; the computational benefit of compilation is measured
within a common implementation in Section~\ref{subsec:exp_computation}.

\begin{table*}[!t]
\centering
\caption{Constrained forward dynamics at identical feasible states.
Errors use a 50-digit hard-constraint KKT reference and
Eq.~\eqref{eq:exp_errors}; $r_c=\|J_{c,T}a+\gamma_{c,T}\|_\infty$
has components in m/s$^2$, rad/s$^2$, or s$^{-2}$ for
universal-axis rows. Time $t$, the median of per-state mean call times,
is measured separately
on 100 Stewart/excavator or 40 Kangaroo states, on one
2.80-GHz Xeon core with one BLAS thread. RoboCompiler is a Python
reference implementation. Pinocchio timings cover the native
solver call; MuJoCo timings include native-array assignment
and \texttt{mj\_forward}; Kangaroo timings cover the complete
Schur recipe. State conversion and model construction are
excluded from the native timing scopes. $^{\dagger}$Pinocchio~4.1 lacks native
universal-loop rows; eight rows are added by a Schur recipe
requiring ten native solver calls per acceleration.
$^{\ddagger}$Below the estimated excavator reference floor from
alternate well-conditioned independent subsets of redundant rows:
$\delta_a$ max $6.5\times10^{-10}$, median $4.1\times10^{-11}$;
$\delta_f$ max $2.3\times10^{-12}$. Such entries agree with the reference to within its resolution and are mutually indistinguishable.}
\label{tab:rc_dynamics}
\small
\setlength{\tabcolsep}{2pt}
\begin{tabular}{@{}lccccr@{}}
\toprule
Method & $\delta_a$ max & $\delta_a$ median & $\delta_f$ max & $r_c$ max & $t$ \\
\midrule
\multicolumn{6}{@{}l}{\textit{Stewart platform: 24 tree coordinates, 6 point cuts (18 rows, rank 18); 1,401 states}} \\
RoboCompiler (authors' code) & $1.6{\times}10^{-8}$ & $2.5{\times}10^{-10}$ & $4.9{\times}10^{-10}$ & $2.4{\times}10^{-7}$ & $22.3\,\mathrm{ms}$ \\
Pinocchio 4.1 \texttt{lcaba}, $\mu{=}10^{-6}$ & $9.3{\times}10^{-9}$ & $2.1{\times}10^{-9}$ & $1.5{\times}10^{-9}$ & $1.1{\times}10^{-13}$ & $19\,\mu\mathrm{s}$ \\
Pinocchio 4.1 \texttt{lcaba}, $\mu{=}10^{-2}$ & $9.8{\times}10^{-13}$ & $5.5{\times}10^{-13}$ & $1.4{\times}10^{-13}$ & $8.0{\times}10^{-13}$ & $28\,\mu\mathrm{s}$ \\
Pinocchio 4.1 \texttt{constraintDynamics} & $7.6{\times}10^{-14}$ & $3.3{\times}10^{-14}$ & $3.1{\times}10^{-15}$ & $6.4{\times}10^{-13}$ & $16\,\mu\mathrm{s}$ \\
MuJoCo 3.3.7 equalities & $2.6{\times}10^{-1}$ & $4.0{\times}10^{-3}$ & $3.0{\times}10^{-3}$ & $6.6{\times}10^{-1}$ & $22\,\mu\mathrm{s}$ \\
MuJoCo 3.14.0 equalities & $6.7{\times}10^{-3}$ & $4.1{\times}10^{-3}$ & $1.5{\times}10^{-4}$ & $3.8{\times}10^{-2}$ & $20\,\mu\mathrm{s}$ \\
\addlinespace
\multicolumn{6}{@{}l}{\textit{Excavator arm: 23 tree coordinates, 9 revolute cuts as 18 point pairs (54 rows, rank 16); 300 states}} \\
RoboCompiler (authors' code) & $5.1{\times}10^{-7}$ & $8.5{\times}10^{-11}$ & $1.1{\times}10^{-12}{}^{\ddagger}$ & $3.4{\times}10^{-11}$ & $12.6\,\mathrm{ms}$ \\
Pinocchio 4.1 \texttt{lcaba}, $\mu{=}10^{-6}$ & $6.9{\times}10^{-6}$ & $6.9{\times}10^{-7}$ & $2.2{\times}10^{-10}$ & $3.1{\times}10^{-12}$ & $35\,\mu\mathrm{s}$ \\
Pinocchio 4.1 \texttt{lcaba}, $\mu{=}10^{-2}$ & $4.6{\times}10^{-9}$ & $1.6{\times}10^{-10}$ & $2.8{\times}10^{-14}{}^{\ddagger}$ & $3.1{\times}10^{-12}$ & $109\,\mu\mathrm{s}$ \\
Pinocchio 4.1 \texttt{constraintDynamics} & $4.2{\times}10^{-12}{}^{\ddagger}$ & $4.0{\times}10^{-13}{}^{\ddagger}$ & $4.3{\times}10^{-15}{}^{\ddagger}$ & $7.2{\times}10^{-12}$ & $73\,\mu\mathrm{s}$ \\
MuJoCo 3.3.7 equalities & $2.4{\times}10^{-2}$ & $2.3{\times}10^{-3}$ & $5.3{\times}10^{-5}$ & $3.2{\times}10^{-1}$ & $33\,\mu\mathrm{s}$ \\
MuJoCo 3.14.0 equalities & $2.4{\times}10^{-2}$ & $2.3{\times}10^{-3}$ & $2.9{\times}10^{-5}$ & $2.7{\times}10^{-1}$ & $32\,\mu\mathrm{s}$ \\
\addlinespace
\multicolumn{6}{@{}l}{\textit{Kangaroo: 82 generalized velocities incl. base, 16 point + 8 universal cuts (80 rows, rank 64) + 6 per welded sole; 200 states}} \\
RoboCompiler (authors' code) & $4.4{\times}10^{-12}$ & $1.4{\times}10^{-13}$ & $2.8{\times}10^{-13}$ & $8.0{\times}10^{-10}$ & $52.6\,\mathrm{ms}$ \\
\texttt{lcaba} + Schur$^{\dagger}$, $\mu{=}10^{-6}$ & $3.3{\times}10^{-4}$ & $5.1{\times}10^{-5}$ & $3.4{\times}10^{-5}$ & $5.1{\times}10^{-3}$ & $3.0\,\mathrm{ms}$ \\
\texttt{lcaba} + Schur$^{\dagger}$, $\mu{=}10^{-2}$ & $5.3{\times}10^{-9}$ & $1.2{\times}10^{-9}$ & $9.6{\times}10^{-12}$ & $2.1{\times}10^{-5}$ & $3.7\,\mathrm{ms}$ \\
\texttt{constraintDynamics} + Schur$^{\dagger}$ & $5.3{\times}10^{-13}$ & $3.9{\times}10^{-15}$ & $2.4{\times}10^{-15}$ & $1.4{\times}10^{-11}$ & $5.6\,\mathrm{ms}$ \\
MuJoCo 3.3.7 equalities & $2.7{\times}10^{-1}$ & $2.4{\times}10^{-3}$ & $2.9{\times}10^{-3}$ & $1.5{\times}10^{1}$ & $125\,\mu\mathrm{s}$ \\
MuJoCo 3.14.0 equalities & $2.7{\times}10^{-1}$ & $3.2{\times}10^{-3}$ & $2.9{\times}10^{-3}$ & $2.3{\times}10^{1}$ & $125\,\mu\mathrm{s}$ \\
\bottomrule
\end{tabular}
\end{table*}

\textit{Floating bases, support modes, and port power.}
Separate double-precision checks evaluate 48 Go2 states spanning
all eleven tested two-, three-, and four-foot ideal-support sets.
Their constraint ranks are 6, 9, and 12, leaving 12, 9, and 6
independent speeds. Maximum normalized acceleration discrepancies
are $3.225\times10^{-13}$ against native Pinocchio and
$3.252\times10^{-13}$ against independently assembled KKT.
For Kangaroo, 48 states cover free motion, either sole welded,
and both soles welded. Ideal sole constraints increase the rank
from 64 to 70 or 76, leaving 18, 12, or 6 independent speeds;
the corresponding discrepancy maxima are
$1.86\times10^{-12}$ and $1.85\times10^{-12}$.
Kangaroo's maximum actuator virtual-power defect is
$1.43\times10^{-14}$ W. Panda's sixteen mechanics configurations
give a maximum port virtual-work defect of
$4.26\times10^{-14}$ W. These checks test the tangent, acceleration, and dual-effort interfaces at specified support modes; contact transitions are exercised in the native executions of Section~\ref{subsec:exp_tasks}.

\textit{Acceleration curvature.}
Figure~\ref{fig:exp_curvature} connects independent dynamics
checks to an omission study. For the 21-coordinate compiled
Stewart tree, 48 feasible states give a maximum normalized
acceleration discrepancy of $3.051\times10^{-13}$ against
native Pinocchio. At four times the prescribed actuator speed,
the maximum physical point-acceleration residual is
$8.876\times10^{-11}$ m/s$^2$ with curvature and
$1.010$ m/s$^2$ without it. Go2 gives corresponding residuals
of $8.676\times10^{-11}$ and $1.445$ m/s$^2$.
The curvature correction thus lowers the acceleration residual by about ten orders of magnitude, showing that acceleration compatibility requires it in addition to position closure and a valid tangent map. The curvature is evaluated by a centered directional difference of the analytic closure Jacobian (Listing~\ref{lst:jdot}).

\begin{lstlisting}[float=tp, label={lst:jdot},
  caption={Velocity-product and curvature terms. \texttt{C}
  returns the analytic closure Jacobian $C$ of~\eqref{eq:closure}
  in chart coordinates, where $v=\dot q$; no coordinate moves by
  more than $h_0$. With $S$ and $p$ from~\eqref{eq:partition}, the
  last lines form $c=\dot Eu$, whose tree rows enter
  the bias term of~\eqref{eq:dynamics}. Some workflows normalize
  by $\|\dot q\|_2$ or use $h_0=10^{-6}$, which keeps this bound.}]
def jdot_qdot(C, q, qdot, h0=1e-5):
    """Cdot(q, qdot) qdot from the analytic
    closure Jacobian C(q), by a centered
    directional difference along qdot."""
    h = h0 / max(1.0, np.max(np.abs(qdot)))
    dC = (C(q + h*qdot) - C(q - h*qdot)) / (2*h)
    return dC @ qdot  # second-order accurate

v = E @ u
cv = S @ jdot_qdot(C, q, v)  # selected rows
c = np.zeros_like(v)         # c = Edot u
c[p] = -solve(S @ C(q)[:, p], cv)
\end{lstlisting}

\begin{figure*}[!t]
\centering
\includegraphics[width=0.85\textwidth]{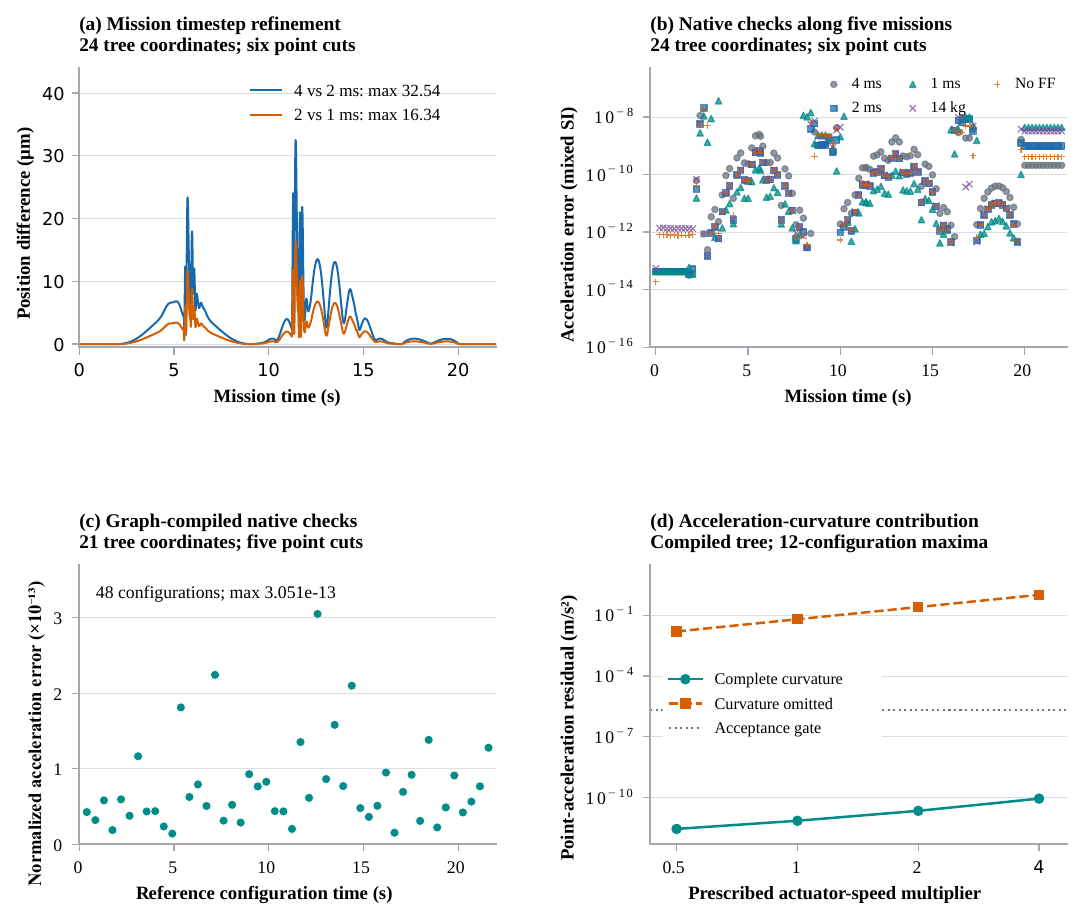}
\caption{Stewart dynamics verification. (a) Position differences under successive
integration-step refinements. (b) Native constrained-acceleration
checks at 113 states per mission case, in mixed SI coordinates.
(c) Normalized acceleration discrepancies at 48 configurations
of the compiled tree. (d) Physical point-acceleration residuals
at four prescribed speed multipliers, with and without curvature;
the dotted line denotes the recorded acceptance threshold.
Panels (a,b) use the 24-coordinate mission tree with six point
cuts; (c,d) use the 21-coordinate compiled tree with five cuts.}
\label{fig:exp_curvature}
\end{figure*}

\subsection{Generated Evaluation and Selective Updates}
\label{subsec:exp_computation}
The computational studies distinguish three effects: replacing
global path evaluation by generated cycle-local evaluation,
reusing modules whose complete inputs are unchanged, and supplying
analytic rather than numerical corrector derivatives.
Table~\ref{tab:exp_timing} summarizes matched comparisons.

For fixed excavator cuts, the generated residual and analytic
Jacobian agree with the original evaluator within
$7.44\times10^{-15}$. Dependency grouping reduces the largest
correction block from 25 to ten coordinates (Listing~\ref{lst:excavator}). Over a 42-s,
4201-sample reference, loop-free exclusion and modular updates
reduce summed correction-plus-output-map time by 25.3\% in
joint coordinates and 32.1\% with held cylinder strokes. The seven
independent speeds are preserved, and the reduced dynamics remain
physically coupled.

\begin{lstlisting}[float=tp, label={lst:excavator},
  caption={Abridged excavator input and compiler calls. No joint
  is marked as a cut: the compiler selects nine revolute cuts,
  including \texttt{q5}, checks each requested partition against
  the mobility, and groups the 25 dependent coordinates into
  modules, which are then corrected as in
  Listing~\ref{lst:kangaroo}.}]
physical = {  # excavator records, abridged
  "bodies": [...],  # 25, with mass and inertia
  "joints": [       # 34 edges; no cut is marked
    {"id": "q5", "type": "revolute", ...,
     "body_a": "body_54", "body_b": "body_3"},
    ...],
  "actuators": [...],  # 8 actuator ports
  "partition_requests": {
    "joint_space": ["q23", "q7", "q4", "q0",
                    "q1", "q21", "q22"],
    "cylinder_space": ["q23", "p3", "p5", "p2",
                       "p1", "q21", "q22"]},
  "seed": {...}, "branch_window": {...}}

comp = compile_graph(physical, "joint_space")
comp.plan["cut_joint_ids"]  # 9 revolute cuts
comp.plan["mobility"]       # 7
comp.plan["module_sizes"]   # [3, 3, 10, 3, 3, 3]
comp = compile_graph(physical, "cylinder_space")
comp.plan["module_sizes"]   # [6, 10, 6, 3]
\end{lstlisting}

Go2's local edits hold the prescribed base pose fixed, so changing
one foot target requires correction of only its three-coordinate
leg block (Listing~\ref{lst:go2}). One- and two-foot updates reduce median update
time, including output mapping, by 61.8\% and 28.1\%, respectively,
relative to generated global updates. For Kangaroo, common-ancestor pruning limits
closure paths to at most three joint steps. The table separately measures the global evaluator and
one-module update, including output-map construction. These
comparisons preserve the PACDM equations while changing evaluation
or scheduling.

\begin{lstlisting}[float=tp, label={lst:go2},
  caption={Abridged Go2 input and task-module calls. Feet are
  point sites, not closure joints. The compiler generates the
  floating-base chart, one three-coordinate module per foot, and
  the eleven two- to four-foot support plans; \texttt{step}
  re-solves only modules whose base or foot inputs changed.}]
physical = {  # Go2 records, abridged
  "floating_root": "base",
  "bodies": [...],  # 13, with mass and inertia
  "joints": [...],  # 12 revolute edges
  "actuators": [...],  # 12 motor ports
  "point_sites": [  # feet as task/contact sites
    {"id": "FL", "body": "FL_calf",
     "point_m": [-0.002, 0.0, -0.213], ...}, ...],
  "seed": {...}}

comp = compile_graph(physical)
# generated: base chart + 12 joints, and one
# three-coordinate module per foot site
plans = [comp.support_plan(s)
         for s in support_sets()]  # 11 modes
solver = ModularSolver(comp, q0)
q, E, info = solver.step(np.r_[q_b, p_feet])
# q_b: base pose (6); p_feet: foot targets (12)
\end{lstlisting}

\begin{table*}[!t]
\centering\footnotesize
\setlength{\tabcolsep}{5pt}
\caption{Matched computational ablations. Times are medians
except the final row, which averages two counterbalanced runs
per evaluator. Scheduled excavator times are median summed
correction-plus-map times over three traversals. Numerical
derivative pairs use matched correctors and stopping criteria.
Each row is a matched pair measured on a single host and workload. FD denotes finite differences.}
\label{tab:exp_timing}
\begin{tabular}{@{}llrrr@{}}
\toprule
Robot & Paired operation & Reference & Generated/local & Reduction \\
\midrule
Excavator & Residual + analytic Jacobian (ms) & 4.097 & 0.984 & 76.0\% \\
Excavator & Joint-reference correction + maps (s) & 24.117 & 18.026 & 25.3\% \\
Excavator & Held-cylinder correction + maps (s) & 24.493 & 16.628 & 32.1\% \\
Go2 & One-foot update (ms) & 3.467 & 1.326 & 61.8\% \\
Go2 & Colored FD / analytic correction (ms) & 9.657 & 4.311 & 55.4\% \\
Kangaroo & Residual + analytic Jacobian (ms) & 10.172 & 0.339 & 96.7\% \\
Kangaroo & One-module update (ms) & 8.963 & 1.979 & 77.9\% \\
Stewart & Original / compiled assembly + maps (ms) & 24.924 & 21.349 & 14.3\% \\
Stewart & Dense FD / analytic correction (ms) & 224.085 & 16.705 & 92.5\% \\
Stewart & Colored FD / analytic correction (ms) & 102.621 & 25.338 & 75.3\% \\
Kangaroo & Matched 10-s Pinocchio rollout (s) & 585.594 & 194.366 & 66.8\% \\
\bottomrule
\end{tabular}
\end{table*}

Stewart separates representation, prediction, and derivative
effects (Fig.~\ref{fig:stewart_compiler_benefits}).

\begin{figure}[h!]
\hspace*{-0.0cm} % Adjust the value as needed
\centering
\scalebox{1}{\includegraphics[trim={0cm 0.0cm 0.0cm 0cm},clip,width=\columnwidth]{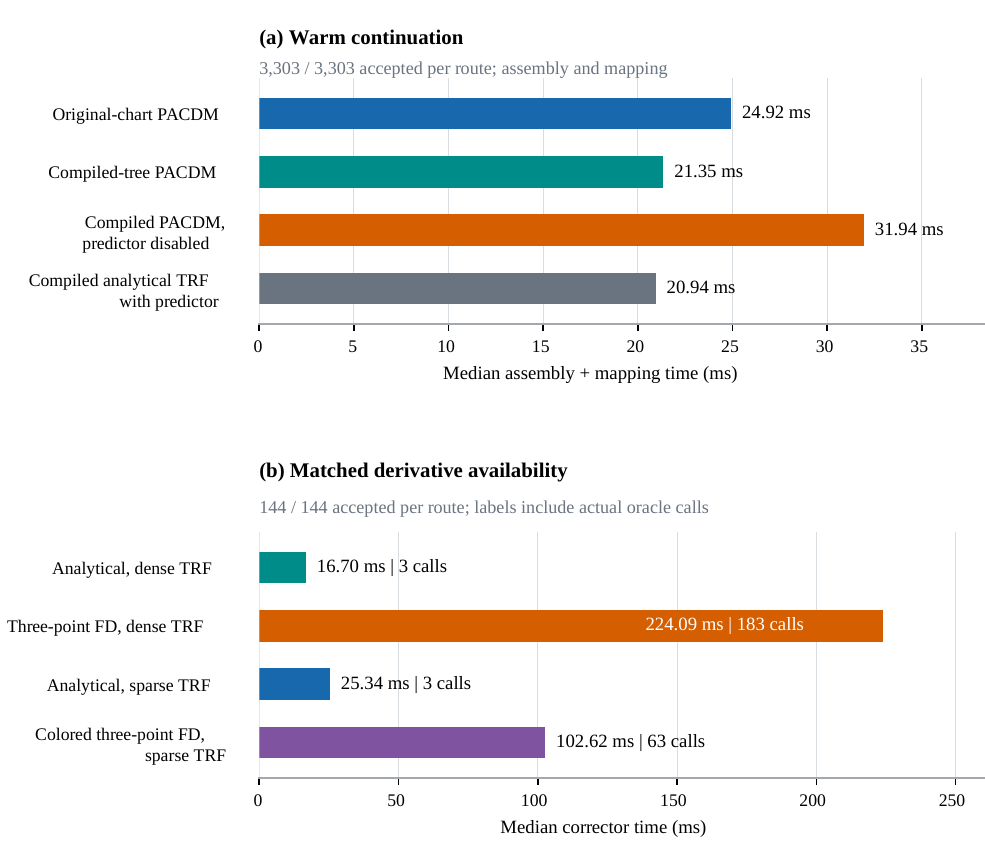}}
\caption{Computational evaluation from the recorded full-profile run;
bars show medians. (a) Assembly-plus-mapping time over three 1,101-knot traversals for
the original and compiled PACDM representations, the compiled
predictor-disabled route, and same-chart analytical TRF.
(b) Matched TRF corrector timings with analytical or three-point
finite-difference (FD) derivatives: dense modes share the exact dense
subsolver; sparse/colored modes share LSMR. All attempts are accepted.
Oracle counts include finite-difference perturbation evaluations.
The timing scopes in the two panels are evaluated separately.}
\label{fig:stewart_compiler_benefits}
\end{figure}

Over three traversals of
1101 knots, all four tested continuation methods accept all
3303 samples. Compiling the physical-tree representation reduces
median assembly-plus-mapping time by 14.3\%; tangent prediction
reduces it by 33.2\% relative to the 31.943-ms predictor-disabled
case. In separate matched corrector trials, all 144 attempts per
mode succeed. Analytic Jacobians reduce median fresh evaluations
from 183 to three against dense three-point differences and
from 63 to three against sparsity-colored differences. Dense
pairs share the exact dense TRF subsolver; sparse pairs share
LSMR. The benefit therefore persists when numerical differentiation
exploits the generated sparsity. Go2 similarly reduces fresh evaluations from
21 colored evaluations to three analytic evaluations.

The complete Kangaroo rollout shows that evaluator savings persist inside closed-loop integration
(Fig.~\ref{fig:exp_rollout}). Only the graph evaluator changes;
the PACDM solver, dynamics, contact algorithm, controller, and
1-ms integration step remain identical. Counterbalanced
original--generated--generated--original runs on one 2.10-GHz
Xeon host with one BLAS thread reduce mean elapsed time from
585.594 to 194.366 s. Both versions execute 10000 steps,
37870 fresh graph evaluations, and 111 correction iterations.
Maximum differences are $7.64\times10^{-14}$ m in motor
position, $1.93\times10^{-10}$ in reduced-velocity components,
and $1.20\times10^{-10}$ N\,s in net foot-impulse components.
Because only the graph evaluator changes, the 3.0-fold speed-up is attributable to compilation alone, and the differences above confirm that the computed response is preserved.

\begin{figure*}[!t]
\centering
\includegraphics[width=0.86\textwidth]{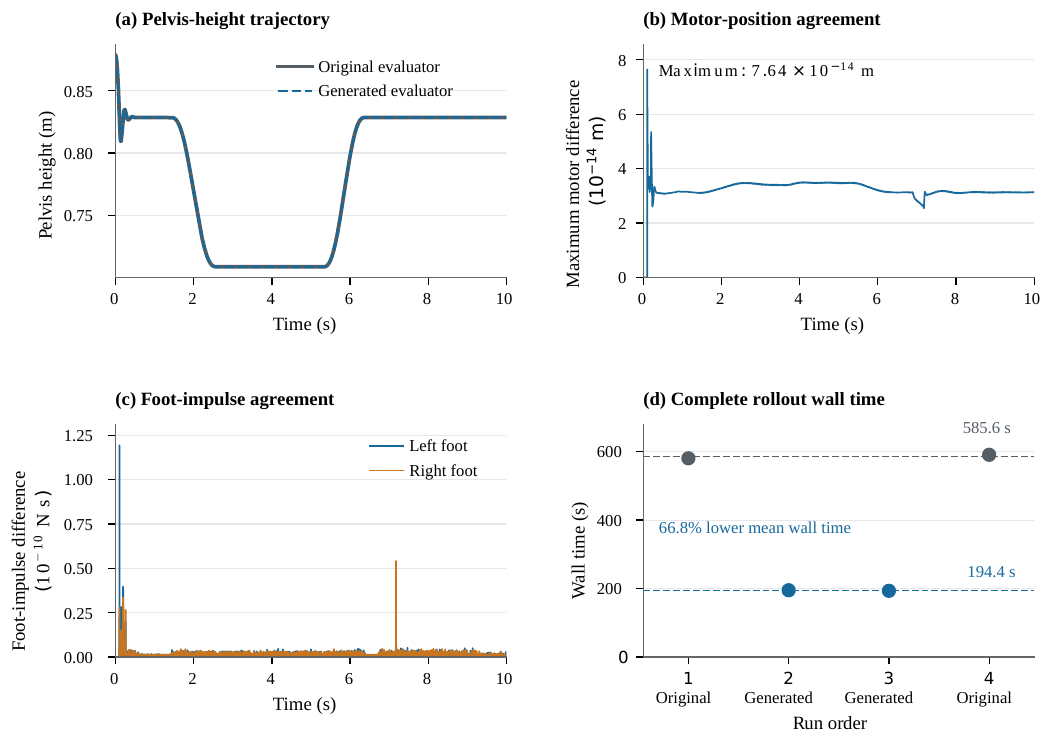}
\caption{Matched 10-s Kangaroo Pinocchio executions. (a) Pelvis
height. (b) Motor-position differences. (c) Net per-foot impulse
differences after summing the four contact-corner impulses.
(d) Elapsed times and evaluator means from four counterbalanced
runs. Histories use all 10001 samples; only the graph evaluator
changes between implementations.}
\label{fig:exp_rollout}
\end{figure*}

\subsection{Use of the Compiled Interfaces in Actuation}
\label{subsec:exp_actuation}
Matched feedforward ablations quantify the benefit of actuator commands derived from the compatible motion and dynamics interfaces.
In Kangaroo's 4-s fixed-pelvis experiment, both runs use the same
reference, proportional gain $4\times10^4$ N/m, derivative gain
600 N\,s/m, and 100-$\mu$s physics step.
Adding reduced-dynamics feedforward decreases aggregate
twelve-motor position RMS from 1.653 mm to
$0.647\,\mu$m. Every motor improves, and the peak actuator
force decreases from 250.70 to 239.84 N
(Fig.~\ref{fig:exp_feedforward}). This experiment isolates
feedforward use under a fixed base; floating-base contact
execution is evaluated separately below.

\begin{figure*}[!t]
\centering
\includegraphics[width=0.96\textwidth]{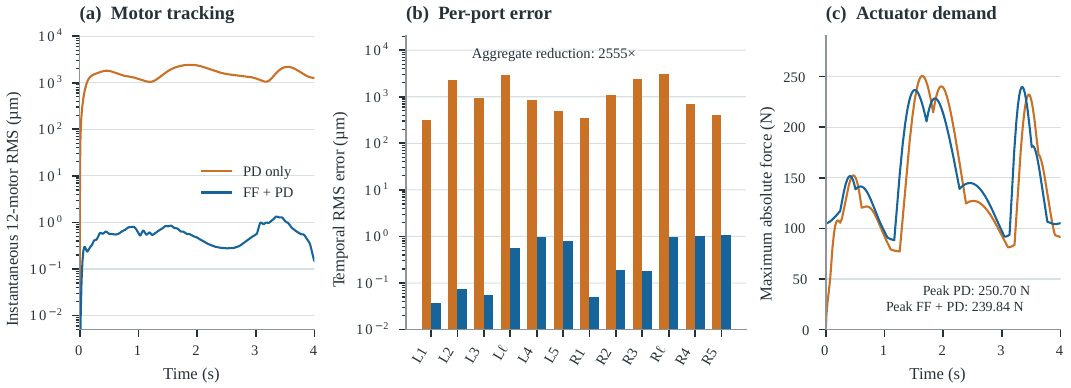}
\caption{Matched Kangaroo actuation experiment with a fixed
pelvis. (a) Instantaneous RMS motor-position error. (b) Temporal
RMS for each motor; $L\ell$ and $R\ell$ denote leg-length
ports. (c) Maximum absolute actuator force. The reference,
feedback gains, and physics step are identical; the changed
component is the compatible reduced-dynamics feedforward.}
\label{fig:exp_feedforward}
\end{figure*}

Feedforward derived from the same interface also reduces tracking error on Stewart and Panda.
For Stewart, feedforward reduces nominal platform-position RMS
from 2.375 to 0.717 mm in MuJoCo and direct PACDM--Pinocchio,
and from 2.374 to 0.717 mm in PhysX. Increasing the actual
payload from 8 to 14 kg while retaining the nominal feedforward
model gives 0.960, 0.959, and 0.959 mm, respectively.
All four PhysX cases, including timestep refinement, respect
the stroke and force limits, with peak force below 152 N
against the 900-N bound.

For Panda, the same-reference MuJoCo arm ablation reduces
tool-position RMS from 6.224 to 0.152 mm; the removed terms
include inverse dynamics, armature-acceleration compensation,
and damping compensation. A separate Pinocchio tool-wrench
execution gives 6.326 versus 0.150 mm. Across Kangaroo, Stewart, and Panda, these ablations show that the generated motion-to-effort interface directly supports model-based feedforward; computational savings are isolated in Section~\ref{subsec:exp_computation}.

\subsection{Native Task Execution and Perturbations}
\label{subsec:exp_tasks}

\textit{Excavation with physical material transfer.}
The excavator completes filled-bed and exposed-face MuJoCo
missions with 117.286 and 83.776 kg of identity-qualified
settled delivery. A particle contributes only after it has
been lifted in the bucket, released, and observed to settle
inside the receiving region. Empty-bed and disabled-bucket-contact
controls produce zero lifted or delivered mass. All 8856
PACDM updates across the two positive missions succeed without
reacquisition, and the native arm gaps remain below
$4.92\,\mu$m.

The native PhysX execution completes the eleven phases in
42.001 s, with peak arm and track gaps of $11.46\,\mu$m
and 0.423 mm, respectively
(Fig.~\ref{fig:exp_excavation}). Its 4202 controller evaluations
require no reacquisition. Eight fixed-base Pinocchio
cutting-and-payload cases also complete all phases, with maximum
acceleration-closure component $1.90\times10^{-11}$ m/s$^2$.
Those cases use prescribed smooth cutting and payload loads and validate force-driven mechanism execution.

\begin{figure*}[!t]
\centering
\includegraphics[width=0.80\textwidth]{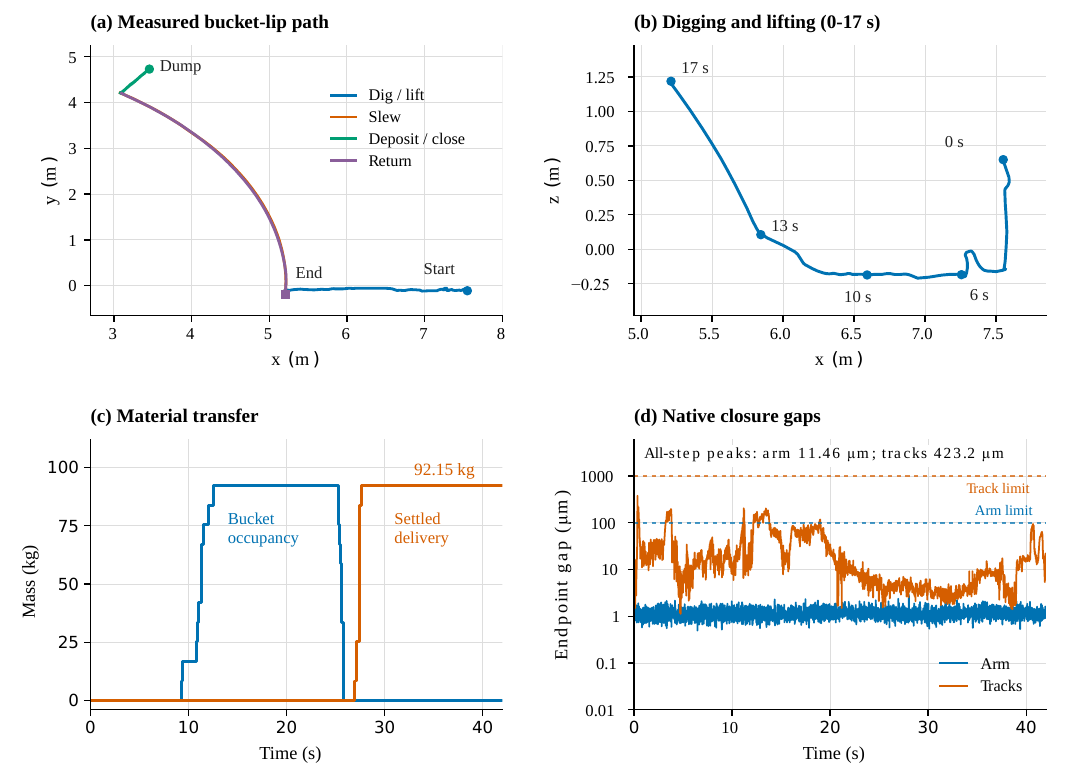}
\caption{Native PhysX excavation. (a) Complete bucket-lip plan-view
trajectory. (b) Vertical trajectory during excavation and
lifting (0--17 s).
(c) Material retained in the bucket and identity-qualified
settled delivery. (d) Physical arm and track closure gaps.
Native plant gaps are measured separately from the controller's
assembled PACDM closure residual.}
\label{fig:exp_excavation}
\end{figure*}

\textit{Floating-base course execution.}
Go2 completes its 26-s course in MuJoCo, the separately
integrated Pinocchio contact model, and PhysX. Nominal
base-position RMS errors for $t\geq2$ s are 10.60, 5.991,
and 6.926 mm, respectively. The 2601-sample PACDM reference
contains 110 scheduled support transitions and requires no
fallback. During native PhysX execution, measured unilateral
contacts produce two-, three-, and four-foot support.
Table~\ref{tab:exp_go2_perturbation} reports refinement,
friction, payload, and push cases. All cases complete the course, including the 25\%-stronger-push case, whose peak motor command reaches the authored limit. With an unmodeled 1.5-kg payload, Go2 still completes the course with 8.897-mm RMS and 6.655-mm final error.

\begin{table}[!t]
\centering\small
\setlength{\tabcolsep}{3pt}
\caption{Go2 native PhysX perturbations. RMS uses $t\geq2$ s;
final error is at 26 s. Motor utilization is the maximum
command normalized by the authored limit.}
\label{tab:exp_go2_perturbation}
\begin{tabular}{@{}lrrr@{}}
\toprule
Case & RMS (mm) & Final (mm) & Motor (\%) \\
\midrule
Nominal, 1-ms step & 6.926 & 0.179 & 68.43 \\
0.5-ms step & 6.736 & 0.474 & 64.22 \\
Friction 0.55 & 6.855 & 0.265 & 72.26 \\
1.5-kg payload & 8.897 & 6.655 & 59.88 \\
25\% stronger pushes & 7.237 & 0.222 & 100.00 \\
\bottomrule
\end{tabular}
\end{table}

\textit{Manipulation with coupled fingers.}
Panda completes the 22-s task with a 0.715-mm nominal final
planar error in MuJoCo and the PhysX result in
Table~\ref{tab:exp_robots}. Four PhysX cases include a halved step,
a 0.30-kg lower-friction cartridge, a 4-mm pickup offset,
and a 1-mm socket side clearance. All maintain two-finger
contact during the 8--16-s transfer and finish without arm
saturation, command clipping, or unexpected-contact steps.
The largest final planar error is 0.840 mm, and minimum
payload-corner clearance during obstacle transfer is 92.541 mm.
These results exercise the finger equality, tool reference,
and effort interface together with native grasp contact;
the independent Pinocchio experiment instead applies a
prescribed tool wrench.

\textit{Closed-chain landing and recovery.}
Kangaroo completes a 10-s sequence containing a 5-cm landing,
12-cm crouch, lateral and yaw motion, and a torso push.
Across five shared MuJoCo/Pinocchio conditions, including lower
friction, increased release height and push, timestep refinement,
and actuator response delay, the largest paired pelvis-position
difference is 1.576 mm. Native PhysX also completes the task,
achieving a 120.078-mm crouch within the individual force bounds.
Its nominal maximum point gap is $0.629\,\mu$m and the
dimensionless universal-axis residual is $3.05\times10^{-4}$.
Reporting both tests the translational and angular parts of the
universal-cut realization. MuJoCo and PhysX use a nominal
25-$\mu$s step, whereas the Pinocchio contact integration uses
1 ms; the agreement between MuJoCo and Pinocchio to within 1.576 mm therefore demonstrates model reuse across simulators whose integration steps differ by a factor of 40.

\begin{figure*}[h!]
\centering
\includegraphics[width=0.75\textwidth]{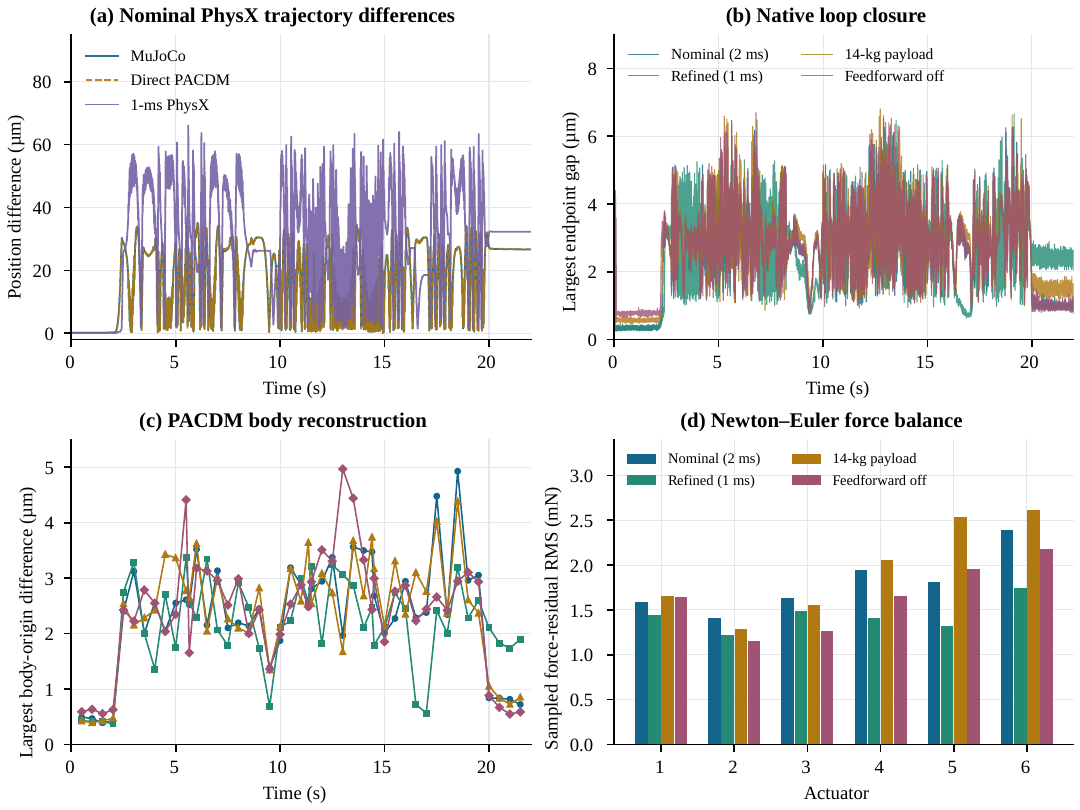}
\caption{Independent Stewart PhysX validation. (a) Position
differences from MuJoCo, direct PACDM--Pinocchio, and the
refined PhysX run at matched timestamps. (b) Maximum native
spherical-closure gaps. (c) Body-origin discrepancies following
fresh assembly at 46 measured states per case.
(d) Per-actuator RMS Newton--Euler force-balance residuals from
Eq.~\eqref{eq:exp_native_balance}. Panels (c,d) use measured
native states and freshly evaluated PACDM mappings.}
\label{fig:exp_native_transfer}
\end{figure*}

\subsection{Independent Native-State and Work Checks}
\label{subsec:exp_native}
Trajectory agreement is complemented by reconstruction and
mechanics checks at measured native states. For Go2, 55 PhysX
readback checks verify the transferred masses, centers of mass,
inertia tensors, and armatures. Twelve free-flight probes cover
six states at both 0.1- and 0.05-ms steps. Measured acceleration
agrees with the CMG/Pinocchio prediction to a maximum normalized
2-norm discrepancy of $1.25\times10^{-5}$; this uses
$\|a-a_{\rm ref}\|_2/\max(1,\|a_{\rm ref}\|_2)$,
distinct from Eq.~\eqref{eq:exp_errors}.
During the nominal course, native link poses and CMG forward
kinematics give foot positions within $0.783\,\mu$m.

Stewart provides a direct test of the closed-chain transfer
(Fig.~\ref{fig:exp_native_transfer}). Four PhysX missions
comprise nominal, refined-step, increased-payload, and
feedforward-disabled cases. At 46 prescribed timestamps per
case, measured actuator lengths and native configuration seeds
are used to assemble PACDM afresh. All 184 assemblies retain
full/passive augmented rank 36 and positive-definite reduced
inertia. Reconstructed body origins agree with native states
within $4.971\,\mu$m; maximum center-of-mass linear and
angular velocity discrepancies are $5.607\times10^{-5}$ m/s
and $3.582\times10^{-4}$ rad/s. These checks use the 24-coordinate mission representation (augmented rank 36) rather than the 21-coordinate compiled tree (augmented rank 30) of Section~\ref{subsec:exp_mechanics}.

An independently evaluated Newton--Euler body sum supplies the
reduced inertia and bias, including curvature. Since independent
coordinates are actuator lengths in this experiment, its
sampled force-balance residual is
\begin{equation}
\begin{split}
r_{f,k}={}&M_{r,k}
 \frac{\dot\ell_{k+1}-\dot\ell_k}{\Delta t}
 +h_{r,k}-f_k\\
&-E_{T,k}^{\top}J_{w,k}^{\top}w_k,
\end{split}
\label{eq:exp_native_balance}
\end{equation}
where $J_w$ maps tree velocities to platform twist and $w_k$
is the applied platform wrench. The forward velocity difference
is paired with the force held over that step. The largest
per-actuator RMS residual across the four sampled missions is
2.612 mN. Unlike reference-trajectory comparison alone, this
check connects the compiled inertial data, measured motion,
curvature, and effort projection to the native plant response.

Numerical refinement provides a further check on integrated
motion and mechanical work. For Stewart, consecutive 4/2-ms
and 2/1-ms direct PACDM--Pinocchio runs reduce the maximum
paired position difference from 32.541 to $16.337\,\mu$m.
For Kangaroo, the maximum energy-ledger residual decreases
from 0.04108 to 0.02050 J when the MuJoCo step is halved from
25 to $12.5\,\mu$s, and from 0.20313 to 0.10207 J when
the Pinocchio step is halved from 1 to 0.5 ms. Halving the
PhysX step changes terminal pelvis position by 0.873 mm and
total actuator work by 0.08361 J; doubling position iterations
changes them by 0.03687 mm and $7.35\times10^{-4}$ J. Stewart
refinement appears in Fig.~\ref{fig:exp_curvature}(a).

\section{Conclusion}
\label{sec:conclusion}
RoboCompiler compiles a canonical mechanism graph of physical bodies,
joints, attachment frames, inertias, and actuator ports into one
interface for closure, motion, actuation, and dynamics. From a shared
assembled configuration, it generates the tangent lift, the
constraint-curvature correction, and virtual-work-dual actuator-port
maps, so that controllers, reduced dynamics, and external simulators
refer to the same assembly branch and actuator coordinates. On three
closed-chain mechanisms, RoboCompiler's constrained accelerations agree
with a 50-digit hard-constraint reference to a maximum normalized
discrepancy of $5.1\times10^{-7}$; on Stewart and Go2 at four times the
prescribed speed, the curvature correction lowers acceleration
residuals by about ten orders of magnitude. Generated cycle-local
evaluation reduces Kangaroo residual-and-Jacobian evaluation time by
96.7\% and the wall time of a complete closed-loop rollout by 66.8\%
while preserving the computed response, and models exported from the
same mechanism graph execute in MuJoCo, Pinocchio, and Isaac
Sim/PhysX, including native contact, material transfer, and
closed-chain landing. Future work will transfer the compiled
interfaces to hardware, extend the reduction through mechanism
singularities, and synthesize transition and reset laws for
connection changes during motion.

\bibliographystyle{IEEEtran}
\bibliography{lcsys}

@article{pakkila2017modeling,
  title={MODELING AND SIMULATION OF A SIX DEGREES OF FREE-DOM EXCAVATOR},
  author={P{\"a}kkil{\"a}, Sami},
  journal={Automation Technology, Engineering and Sciences, Tampere University of Technology, Tampere, Finland},
  year={2017}
}

@misc{kangaroo_sim2sim,
    author       = {{HuCeBot team, Inria/Loria}},
    title        = {{mujoco\_kangaroo\_sim2sim}},
    howpublished = {\url{https://github.com/hucebot/mujoco_kangaroo_sim2sim}},
    note         = {Commit c020b68, BSD 2-Clause license},
    year         = {2025}
  }

@misc{menagerie,
    author       = {Zakka, Kevin and Tassa, Yuval and {MuJoCo Menagerie Contributors}},
    title        = {{MuJoCo Menagerie}: A collection of high-quality simulation models for {MuJoCo}},
    howpublished = {\url{https://github.com/google-deepmind/mujoco_menagerie}},
    note         = {Commits 822c2d8 (\texttt{franka\_emika\_panda}, Apache-2.0 license) and 367e3d9 (\texttt{unitree\_go2}, BSD 3-Clause license)},
    year         = {2022}
  }

@inproceedings{shahna2025anti,
  title={Anti-slip AI-driven model-free control with global exponential stability in skid-steering robots},
  author={Shahna, Mehdi Heydari and Mustalahti, Pauli and Mattila, Jouni},
  booktitle={2025 IEEE/RSJ International Conference on Intelligent Robots and Systems (IROS)},
  pages={6855--6862},
  year={2025},
  organization={IEEE}
}

@inproceedings{shahna2024exponential,
  title={Exponential auto-tuning fault-tolerant control of n degrees-of-freedom manipulators subject to torque constraints},
  author={Shahna, Mehdi Heydari and Mattila, Jouni},
  booktitle={2024 IEEE 63rd Conference on Decision and Control (CDC)},
  pages={7263--7270},
  year={2024},
  organization={IEEE}
}

@misc{extended,
  author = {Batto, V. and De Matte{\"i}s, L. and Mansard, N.},
  title  = {Extended {URDF}: Accounting for Parallel Mechanism in Robot Description},
  year   = {2025},
  note   = {arXiv:2504.04767v2}
}

@article{morphology,
  author  = {D{\'i}az Ledezma, F. and Haddadin, S.},
  title   = {Machine Learning-Driven Self-Discovery of the Robot Body Morphology},
  journal = {Science Robotics},
  year    = {2023},
  volume  = {8},
  number  = {85},
  pages   = {eadh0972},
  doi     = {10.1126/scirobotics.adh0972}
}

@article{volpi,
  author  = {Volpi, D. J. and Wensing, P. M.},
  title   = {Adapting Rigid-Body Dynamics Derivatives for Constraint Embedding Closed-Chain Models},
  journal = {IEEE Robotics and Automation Letters},
  year    = {2026},
  volume  = {11},
  number  = {9},
  pages   = {10784--10791},
  doi     = {10.1109/LRA.2026.3709575}
}

@article{hqp,
  author  = {Escande, A. and Mansard, N. and Wieber, P.-B.},
  title   = {Hierarchical Quadratic Programming: Fast Online Humanoid-Robot Motion Generation},
  journal = {The International Journal of Robotics Research},
  year    = {2014},
  volume  = {33},
  number  = {7},
  pages   = {1006--1028}
}

@article{redundancy,
  author  = {Albu-Sch{\"a}ffer, A. and Sachtler, A.},
  title   = {Redundancy Resolution at Position Level},
  journal = {IEEE Transactions on Robotics},
  year    = {2023},
  volume  = {39},
  number  = {6},
  pages   = {4240--4261}
}

@misc{pbds,
  author = {Bylard, A. and Bonalli, R. and Pavone, M.},
  title  = {Composable Geometric Motion Policies Using Multi-Task Pullback Bundle Dynamical Systems},
  year   = {2021},
  note   = {arXiv:2101.01297v2}
}

@misc{parallel,
  author = {De Matte{\"i}s, L. and Batto, V. and Carpentier, J. and Mansard, N.},
  title  = {Optimal Control of Walkers with Parallel Actuation},
  year   = {2025},
  note   = {arXiv:2504.00642v1}
}

@book{featherstone,
  author    = {Featherstone, R.},
  title     = {Rigid Body Dynamics Algorithms},
  publisher = {Springer},
  year      = {2008}
}

@inproceedings{pinocchio,
  author    = {Carpentier, J. and others},
  title     = {The {Pinocchio} {C++} Library: A Fast and Flexible Implementation of Rigid Body Dynamics Algorithms and Their Analytical Derivatives},
  booktitle = {Proc. IEEE/SICE SII},
  year      = {2019},
  pages     = {614--619}
}

@article{bordalba2021kinodynamic,
  author  = {Bordalba, R. and Ros, L. and Porta, J. M.},
  title   = {A Randomized Kinodynamic Planner for Closed-Chain Robotic Systems},
  journal = {IEEE Transactions on Robotics},
  volume  = {37},
  number  = {1},
  pages   = {99--115},
  year    = {2021},
  doi     = {10.1109/TRO.2020.3010628}
}

@article{kumar2020hyrodyn,
  author  = {Kumar, S. and von Szadkowski, K. A. and Mueller, A. and Kirchner, F.},
  title   = {An Analytical and Modular Software Workbench for Solving Kinematics and Dynamics of Series-Parallel Hybrid Robots},
  journal = {Journal of Mechanisms and Robotics},
  volume  = {12},
  number  = {2},
  note    = {Art. no. 021114},
  year    = {2020},
  doi     = {10.1115/1.4045941}
}

@article{mingohoffman2024kangaroo,
  author  = {Mingo Hoffman, E. and Curti, A. and Miguel, N. and Kothakota, S. K. and Molina, A. and Roig, A. and Marchionni, L.},
  title   = {Modeling and Numerical Analysis of {Kangaroo} Lower Body Based on Constrained Dynamics of Hybrid Serial--Parallel Floating-Base Systems},
  journal = {Robotics and Autonomous Systems},
  volume  = {182},
  note    = {Art. no. 104827},
  year    = {2024},
  doi     = {10.1016/j.robot.2024.104827}
}

@inproceedings{chignoli2024urdfplus,
  author    = {Chignoli, M. and Slotine, J.-J. and Wensing, P. M. and Kim, S.},
  title     = {{URDF+}: An Enhanced {URDF} for Robots with Kinematic Loops},
  booktitle = {Proc. IEEE-RAS Int. Conf. Humanoid Robots (Humanoids)},
  address   = {Nancy, France},
  pages     = {197--204},
  year      = {2024},
  doi       = {10.1109/Humanoids58906.2024.10769903}
}

@article{chignoli2025propagation,
  author  = {Chignoli, M. and Adrian, N. and Kim, S. and Wensing, P. M.},
  title   = {A Propagation Perspective on Recursive Forward Dynamics for Systems With Kinematic Loops},
  journal = {IEEE Transactions on Robotics},
  volume  = {41},
  pages   = {5584--5603},
  year    = {2025},
  doi     = {10.1109/TRO.2025.3593081}
}

@misc{grbda,
  author       = {Chignoli, M. and Adrian, N. and Wensing, P. M.},
  title        = {{GRBDA}: Generalized Rigid-Body Dynamics Algorithms},
  howpublished = {\url{https://github.com/ROAM-Lab-ND/generalized_rbda}},
  note         = {Commit e68a6ec (2026-06-18)}
}

@article{sathya2026lcaba,
  author  = {Sathya, A. S. and Carpentier, J.},
  title   = {Constrained Articulated Body Algorithms for Closed-Loop Mechanisms},
  journal = {IEEE Transactions on Robotics},
  volume  = {42},
  pages   = {819--838},
  year    = {2026},
  doi     = {10.1109/TRO.2026.3651683}
}

@inproceedings{carpentier2019pinocchio,
  author    = {Carpentier, J. and Saurel, G. and Buondonno, G. and Mirabel, J. and Lamiraux, F. and Stasse, O. and Mansard, N.},
  title     = {The {Pinocchio} {C++} library: A fast and flexible implementation of rigid body dynamics algorithms and their analytical derivatives},
  booktitle = {Proc. IEEE/SICE Int. Symp. System Integration (SII)},
  address   = {Paris, France},
  pages     = {614--619},
  year      = {2019},
  doi       = {10.1109/SII.2019.8700380}
}

@misc{pinocchio410,
  title        = {Pinocchio v4.1.0},
  howpublished = {\url{https://github.com/stack-of-tasks/pinocchio/releases/tag/v4.1.0}},
  note         = {Released 2026-07-07 (tag commit 2ae7766)},
  year         = {2026}
}

@inproceedings{todorov2012mujoco,
  author    = {Todorov, E. and Erez, T. and Tassa, Y.},
  title     = {{MuJoCo}: A physics engine for model-based control},
  booktitle = {Proc. IEEE/RSJ Int. Conf. Intelligent Robots and Systems (IROS)},
  pages     = {5026--5033},
  year      = {2012},
  doi       = {10.1109/IROS.2012.6386109}
}

@misc{mujocochangelog,
  title        = {{MuJoCo} Documentation: Changelog, {Version 3.7.0 (April 14, 2026)} and {Version 3.14.0 (September 22, 2026)}},
  howpublished = {\url{https://mujoco.readthedocs.io/en/stable/changelog.html}},
  note         = {Accessed 2026-09-28}
}

@inproceedings{shahna2025model,
  title={Model-free generic robust control for servo-driven actuation mechanisms with layered insight into energy conversions},
  author={Shahna, Mehdi Heydari and Mattila, Jouni},
  booktitle={2025 American Control Conference (ACC)},
  pages={4331--4338},
  year={2025},
  organization={IEEE}
}

@misc{dastranj2026pacdm,
  author       = {Dastranj, M. and Mattila, J.},
  title        = {Modular Kinematic Reduction of Closed-Chain Mechanisms Using Path Assembly and Defect Homotopy},
  howpublished = {arXiv:2609.11338},
  year         = {2026}
}

@inproceedings{kumar2019hyrodynidetc,
  author    = {Kumar, S. and Mueller, A.},
  title     = {An Analytical and Modular Software Workbench for Solving Kinematics and Dynamics of Series-Parallel Hybrid Robots},
  booktitle = {Proc. ASME Int. Design Eng. Tech. Conf. Comput. Inf. Eng. Conf. (IDETC/CIE), 43rd Mechanisms and Robotics Conf.},
  volume    = {5A},
  address   = {Anaheim, CA, USA},
  note      = {Art. no. V05AT07A054},
  year      = {2019},
  doi       = {10.1115/DETC2019-97115}
}

\end{document}